\newtheorem{theorem}{Theorem}
\newtheorem{corollary}{Corollary}
\newtheorem{lemma}{Lemma}
\newtheorem{assumption}{Assumption}
\newtheorem{remark}{Remark}
\newtheorem{example}{Example}
\newcommand{\redformat}[2]{{{}{}}$#1$ ({\color{red}$ \small #2$})}
\newcommand{\greenformat}[2]{{{}{}}$#1$ ({\color{ForestGreen}$ \small #2$})}
\definecolor{codegreen}{rgb}{0,0.6,0}
\definecolor{codegray}{rgb}{0.5,0.5,0.5}
\definecolor{codepurple}{rgb}{0.58,0,0.82}
\definecolor{backcolour}{rgb}{0.95,0.95,0.92}
\newcommand{\blue}[1]{\begin{color}{blue}#1\end{color}}
\newcommand{\orange}[1]{\begin{color}{orange}#1\end{color}}
\title{Offline Behavior Distillation}
\author{%
  Shiye Lei
    \\
  School of Computer Science\\
  The University of Sydney\\
  \texttt{shiye.lei@sydney.edu.au} \\
  \And
  Sen Zhang \\
  School of Computer Science\\
  The University of Sydney\\
  \texttt{sen.zhang@sydney.edu.au} \\
  \AND
  Dacheng Tao \\
  College of Computing \& Data Science \\
  Nanyang Technological University \\
  \texttt{dacheng.tao@ntu.edu.sg} \\
}
\begin{document}

\maketitle

\begin{abstract}

Massive reinforcement learning (RL) data are typically collected to train policies offline without the need for interactions, but the large data volume can cause training inefficiencies. To tackle this issue, we formulate offline behavior distillation (OBD), which synthesizes limited expert behavioral data from sub-optimal RL data, enabling rapid policy learning. We propose two naive OBD objectives, DBC and PBC, which measure distillation performance via the decision difference between policies trained on distilled data and either offline data or a near-expert policy. Due to intractable bi-level optimization, the OBD objective is difficult to minimize to small values, which deteriorates PBC by its distillation performance guarantee with quadratic discount complexity $\mathcal{O}(1/(1-\gamma)^2)$. We theoretically establish the equivalence between the policy performance and action-value weighted decision difference, and introduce action-value weighted PBC (Av-PBC) as a more effective OBD objective. By optimizing the weighted decision difference, Av-PBC achieves a superior distillation guarantee with linear discount complexity $\mathcal{O}(1/(1-\gamma))$. Extensive experiments on multiple D4RL datasets reveal that Av-PBC offers significant improvements in OBD performance, fast distillation convergence speed, and robust cross-architecture/optimizer generalization. The code is available at \url{https://github.com/LeavesLei/OBD}.

\end{abstract}

\section{Introduction}
\label{sec:intro}

Due to the costs and dangers associated with interactions in reinforcement learning (RL), learning policies from pre-collected RL data has become increasingly popular \citep{levine2020offline}. Consequently, numerous offline RL datasets have been constructed \citep{fu2020d4rl}. However, these offline data are typically massive and collected by sub-optimal or even random policies, leading to inefficiencies in policy training. Inspired by dataset distillation (DD) \citep{wang2018dataset,zhao2021dataset,lei2024comprehensive}, which synthesizes a small number of training images while preserving model training effects, we further investigate the following question: {\it Can we distill vast sub-optimal RL data into limited expert behavioral data?} Achieving this would enable rapid offline policy learning via behavioral cloning (BC) \citep{pomerleau1991efficient}, which can (1) reduce the training cost and enable green AI; (2) facilitate downstream tasks by using distilled data as prior knowledge ({\it e.g.} continual RL \citep{gai2023offline}, multi-task RL \citep{yu2021conservative}, efficient policy pretraining \citep{goecks2019integrating}, offline-to-online fine-tuning \citep{zhao2022adaptive}); and (3) protect data privacy \citep{qiao2023offline}.

{Unlike DD whose objective is prediction accuracy and directly obtainable from real data, the policy performance in RL is measured by the expected return through interactions with environment. In an offline paradigm, where direct interaction with environment is not possible, a metric based on RL data is  necessary to guide the RL data distillation.} Therefore, we formalize the offline behavior distillation (OBD): a limited set of behavioral data, comprising \texttt{(state, action)} pairs, is synthesized from sub-optimal RL data, so that policies trained on the compact synthetic dataset by BC can achieve small OBD objective loss, which incarnates high return when deploying policies in the environment.

The key obstacle for OBD is constructing a proper objective that efficiently and accurately estimates the policy performance based on the sub-optimal offline dataset, allowing for a rational evaluation of the distilled data. {To this end, data-based BC (DBC) and policy-based BC (PBC) present two naive OBD objectives. Specifically,} DBC reflects the policy performance by measuring the mismatch between the policy decision and vanilla offline data. Leveraging existing offline RL algorithms that can extract near-optimal policies from sub-optimal data \citep{levine2020offline}, PBC improves upon DBC by correcting actions in offline data using a near-optimal policy before measuring the decision difference. However, due to the complex bi-level optimization in OBD, the objectives are difficult to minimize effectively, resulting in an inferior distillation performance guarantee with the {\it quadratic} discount complexity $\mathcal{O}(1/(1-\gamma)^2)$ for PBC (Theorem \ref{thm:1}). We tackle this problem and propose the action-value weighted PBC (Av-PBC) as the OBD objective with superior distillation guarantee by taking inspirations from our theoretical findings. Concretely, we theoretically prove the equivalence between the policy performance gap and the action-value weighted decision difference (Theorem \ref{thm:main}). Then, by optimizing the weighted decision difference, we can obtain a much tighter distillation performance guarantee with {\it linear} discount complexity $\mathcal{O}(1/(1-\gamma))$ (Corollary \ref{corollary:1}). Consequently, we weigh PBC with the simple action value, introducing Av-PBC as the OBD objective.

Extensive experiments on nine datasets of D4RL benchmark \citep{fu2020d4rl} with multiple environments and data qualities illustrate that our Av-PBC remarkably promotes the OBD performance, which is measured by normalized return, {by $82.8\%$ and $25.7\%$ compared to baselines of DBC and PBC, respectively}. Moreover, Av-PBC has a significant convergence speed and requires only a quarter of distillation steps compared to DBC and PBC. By evaluating the synthetic data in terms of different network architectures and training optimizers, we show that distilled datasets possess decent cross-architecture/optimizer performance. Apart from evaluations on single policy, we also investigate policy ensemble performance by training multiple policies on the synthetic dataset and combining them to generate actions. The empirical findings demonstrate that the ensemble operation can significantly enhance the performance of policies trained on Av-PBC-distilled data by $25.8\%$.

Our contributions can be summarized as:
\begin{itemize}[leftmargin=7mm]
    \item We formulate the offline behavior distillation problem, and {present two naive OBD objectives of DBC and the improved PBC};
    \item We demonstrate the unpleasant distillation performance guarantee of $\mathcal{O}(1/(1-\gamma)^2)$ for PBC, and theoretically derive a novel objective of Av-PBC that has much tighter performance guarantee of $\mathcal{O}(1/(1-\gamma))$;
    \item Extensive experiments on multiple offline RL datasets verify significant improvements on OBD performance and speed by Av-PBC.
\end{itemize}



\section{Related works}

\paragraph{Offline RL} Data collection can be both hazardous ({\it e.g.} autonomous driving) and costly ({\it e.g.} healthcare) with the online learning paradigm of RL. To alleviate the online interaction, offline RL has been developed to learn the policy from a pre-collected dataset gathered by sub-optimal behavior policies \citep{lange2012batch,fu2020d4rl}. However, the offline paradigm limits exploration and results in the distributional shift problem: (1) the state distribution discrepancy between learned policy and behavior policy at test time; and (2) only in-dataset state transitions are sampled when conducting Bellman backup \citep{bellman1966dynamic} during the training period \citep{levine2020offline}. Various offline RL algorithms have been proposed to mitigate the distributional shift problem. \citet{fujimoto2021a, tarasov2024revisiting} introduce policy constrain that control the discrepancy between learned policy and behavior policy. To address the problem of over-optimistic estimation on out-of-distribution actions, \citet{kumar2020conservative, nakamoto2023calql, kostrikov2022offline} propose to regularize the learned value function for conservative Q learning. Moreover, ensemble approaches have also proven effective in offline RL \citep{an2021uncertainty}. Readers can refer to \citep{tarasov2022corl} for a detailed comparison of offline RL methods. Albeit these advancements, the offline dataset is extremely large (million-level) and contains sensitive information ({\it e.g.} medical history) \citep{qiao2023offline}, 
necessitating consideration of training efficiency, data storage, and privacy concerns. {To address these issues, we distill a small behavioural dataset from vast subpar offline RL data to enable efficient policy learning via BC.
}







\paragraph{Dataset Distillation} Given the resource constraints in era of big data, numerous approaches have focused on improving learning efficiency through memory-efficient model \citep{han2015deep_compression,jing2021meta} and effective data utilization \citep{mirzasoleiman2020coresets,jing2023deep,lei2023image}. Recently, dataset distillation (DD) has emerged as a promising technique for condensing large real datasets into significantly smaller synthetic ones, such that models trained on these tiny synthetic datasets achieve comparable generalization performance to those trained on large original datasets \citep{sachdeva2023data, yu2024dataset,lei2024comprehensive}. This approach addresses key issues such as training inefficiency, data storage limitations, and data privacy concerns. There are two primary frameworks for DD: the meta-learning framework, which formulates dataset distillation as a bi-level optimization problem \citep{wang2018dataset, deng2022remember}, and the matching framework, which matches the synthetic and real datasets in terms of gradient \citep{zhao2021dataset, zhao2021dsa}, feature \citep{zhao2023distribution, wang2022cafe}, or training trajectory \citep{cazenavette2022dataset, cui2023scaling}. 

While most DD methods focus on image data, \citet{lupu2024behaviour} propose behavior distillation (BD), extending DD to online RL regime. In (online) BD, a small number of state-action pairs are synthesized for fast BC training by (1) directly computing policy returns through online interactions; and (2) estimating the meta-gradient {\it w.r.t.} synthetic data via evolution strategies (ES) \citep{salimans2017evolution}. We underline that our OBD is not an extension of online BD, but rather a novel and parallel field because of {different objectives that incur distinct challenges: (1) online BD uses the ground truth objective, {\it i.e.}, policy return, by sampling many long episodes from environments. As a result, backpropagating the meta-gradient of return {\it w.r.t.} synthetic data is extremely inefficient, and \citet{lupu2024behaviour} tackle the challenge by estimating meta-gradient with the zero-order algorithm of ES; and (2) OBD objective solely relies on offline data instead of long episode sampling, thereby making meta-gradient backpropagation relatively efficient and feasible, and the primary obstacle for OBD lies in designing an appropriate objective that accurately reflects the policy performance.
}





\section{Preliminaries}

\paragraph{Reinforcement Learning} The problem of reinforcement learning can be described as the Markov decision process (MDP) $\langle\mathcal{S}, \mathcal{A}, \mathcal{T}, r, \gamma, d^0\rangle$, where $\mathcal{S}$ is a set of states ${s} \in \mathcal{S}$, $\mathcal{A}$ is the set of actions ${a} \in \mathcal{A}$, $\mathcal{T}({s}^\prime | {s}, {a})$ denotes the transition probability function, $r ({s}, {a})$ is the reward function, $\gamma\in (0,1)$ is the discount factor, and $d^0(s)$ is the initial state distribution \citep{sutton2018reinforcement}. We assume that the reward function is bounded by $R_{\max}$, {\it i.e.}, $r(s, a) \in [0, R_{\max}]$ for all $(s,a)\in \mathcal{S}\times \mathcal{A}$. The objective of RL is to learn a policy $\pi({a}| {s})$ that maximizes the long-term expected return $J(\pi) = \mathbb{E}_{\pi}\left[\sum_{t=0}^{\infty} \gamma^t r_t\right]$, where $r_t = r({s}_t, {a}_t)$ is the reward at $t$-step, and $\gamma$ usually is close to $1$ to consider long-horizon rewards in the most RL tasks. We define $d_{\pi}^t(s)= \Pr(s_t=s; \pi)$ and $\rho_\pi^t(s, a) = \Pr(s_t=s, a_t=a; \pi)$ as $t$-th step state distribution and state-action distribution, respectively. Then, the {discounted stationary state distribution} $d_{\pi}(s) = (1-\gamma) \sum_{t=0}^\infty \gamma^t d_{\pi}^t(s)$, and the {discounted stationary state-action distribution $\rho_\pi(s, a) = (1-\gamma)  \sum_{t=0}^\infty \gamma^t \rho_\pi^t(s, a) $}. Intuitively, the state (state-action) distribution depicts the overall ``frequency'' of visiting a state (state-action) with $\pi$. The action-value function of $\pi$ is $q_\pi(s,a) = \mathbb{E}_\pi\left[\sum_{t=0}^{\infty} \gamma^t r_{t} \mid s_0=s, a_0=a\right]$, which is the expected return starting from $s$, taking the action $a$. Since $r_t \geq 0$, we have $q_\pi(s,a) \geq 0$ for all $(s,a)$.

Instead of interacting with the environment, offline RL learns the policy from a sub-optimal offline dataset $\mathcal{D}_\texttt{off}=\{({s}_i,{a}_i,{s}'_i,r_i)\}_{i=1}^{N_\texttt{off}}$ with specially designed Bellman backup \citep{levine2020offline}. Although $\mathcal{D}_\texttt{off}$ is normally collected by sub-optimal behavior policies, offline RL algorithms can recapitulate a near-optimal policy $\pi^*$ and value function $q_{\pi^*}$ from $\mathcal{D}_\texttt{off}$.

\textbf{Behavioral Cloning} \citep{pomerleau1991efficient} {can be regarded as a special offline RL algorithm and only copes with high-quality data. Given the expert demonstrations $\mathcal{D}_\texttt{BC} = \{({s}_i, {a}_i)\}_{i=1}^{N_\texttt{BC}}$, }
the policy network $\pi_\theta$ parameterized by $\theta$ is trained by cloning the behavior of the expert dataset $\mathcal{D}_\texttt{BC}$ in a supervised manner: $\min_{\theta} \ell_\texttt{BC}(\theta, \mathcal{D}_\texttt{BC}) \coloneq \mathbb{E}_{({s},{a})\sim \mathcal{D}_\texttt{BC}}\left[ {\left(\pi_\theta \left({a}|{s}\right) -  \hat{\pi}^*(a|s)\right)^2}\right]$, 
where $\hat{\pi}^*(a|s) = \frac{\sum_{i=1}^{N_\texttt{BC}}\mathbb{I}(s_i=s, a_i=a)}{\sum_{i=1}^{N_\texttt{BC}}\mathbb{I}(s_i=s)}$ is an empirical estimation based on $\mathcal{D}_\texttt{BC}$. {Compared to general offline RL algorithms that deal with subpar $4$-tuples of $\mathcal{D}_\texttt{off}$, BC only handles expert $2$-tuples, while it has better convergence speed due to the supervised paradigm. This paper aims to distill massive sub-optimal 4-tuples into a few expert 2-tuples, thereby enabling rapid policy learning via BC.}

\subsection{Problem Setup}
We first introduce behavior distillation \citep{lupu2024behaviour} that aims to synthesize few data points $\mathcal{D} = \mathcal{D}_\texttt{syn} = \{({s}_i, {a}_i)\}_{i=1}^{N_\texttt{syn}}$ with small $N_\texttt{syn}$ from the environment, so the policy trained on $\mathcal{D}_\texttt{syn}$ has a large expected return $J$. The problem of behavior distillation can be formalized as follows:
\begin{align}
     \mathcal{D}_\texttt{syn}^\ast = \arg\max_{\mathcal{D}} J\left(\pi_{\theta\left(\mathcal{D}\right)}\right) \quad
\text{s.t.}  \quad  \theta(\mathcal{D}) = \arg\min_\theta \ell_\texttt{BC}(\theta, \mathcal{D}).
\end{align}

During behavior distillation, the return $J$ is directly estimated by the interaction between policy and environment. However, in the offline setting, the environment can not be touched, and only the previously collected dataset $\mathcal{D}_\texttt{off}$ is provided. 
Hence, we employ $\mathcal{H}(\pi_\theta, \mathcal{D}_\texttt{off})$ as a surrogate loss to estimate the policy performance of $\pi_\theta$ given the offline data $\mathcal{D}_\texttt{off}$ without interactions with the environment. Then, {by setting $N_\texttt{syn} \ll N_\texttt{off}$}, {\it offline behavior distillation} can be formulated as below:
\begin{align}
     \mathcal{D}_\texttt{syn}^\ast = \arg\min_{\mathcal{D}} \mathcal{H} \left(\pi_{\theta\left(\mathcal{D}\right)}, \mathcal{D}_\texttt{off}\right) \quad
\text{s.t.}  \quad  \theta(\mathcal{D}) = \arg\min_\theta \ell_\texttt{BC}(\theta, \mathcal{D}).
\end{align}

\subsection{Backpropagation through Time}
The formulation of offline behavior distillation is a bi-level optimization problem: the inner loop optimizes the policy network parameters based on the synthetic dataset with BC by multiple iterations of $\{\theta_1, \theta_2, \cdots, \theta_T\}$. During the outer loop iteration, synthetic data are updated by minimizing the surrogate loss $\mathcal{H}$. With the nested loop, the synthetic dataset gradually converges to one of the optima. This bi-level optimization can be solved by backpropagation through time (BPTT) \citep{werbos1990backpropagation}:
\begin{equation}
    \nabla_\mathcal{D}{\mathcal{H}} = \frac{\partial {\mathcal{H}}}{\partial \mathcal{D}} = \frac{\partial \mathcal{H}}{\partial {\theta}^{(T)}}\left( \sum_{k=0}^{k=T} \frac{\partial {\theta}^{(T)}}{\partial {\theta}^{(k)}} \cdot \frac{\partial {\theta}^{(k)}}{\partial \mathcal{D}} \right),\quad \text{and} \quad 
    \frac{\partial {\theta}^{(T)}}{\partial {\theta}^{(k)}} = \prod_{i=k+1}^{T} \frac{\partial {\theta}^{(i)}}{\partial {\theta}^{(i-1)}}.
\end{equation}
Although BPTT provides a feasible solution to compute the meta-gradient for OBD, the objective $H$ is hardly minimized to near zero in practice owing to the severe complexity and non-convexity of bi-level optimization \citep{wiesemann2013pessimistic}.

\section{Methods}
The key challenge in OBD is {\it determining an appropriate objective loss $\mathcal{H}(\pi_\theta, \mathcal{D}_\texttt{off})$ to estimate the performance of $\pi_\theta$}. While policy performance could be naturally estimated using episode return by learning a MDP environment from $\mathcal{D}_\texttt{off}$, as done in model-based offline RL \citep{kidambi2020morel}, this approach is computationally expensive. Apart from the considerable time required to sample the episode for evaluation, the corresponding gradient computation is also inefficient: although Policy Gradient Theorem $\frac{\partial {J}}{\partial \theta} = \sum_s d_\pi(s) \sum_a q_\pi(s,a) \nabla_\theta \pi_\theta(a|s)$ provides a way to compute meta-gradients \citep{sutton2018reinforcement}, the gradient estimation often exhibits high variance due to the lack of information {\it w.r.t.} $d_\pi(s)$ and $q_\pi(s,a)$.




\subsection{Data-based and Policy-based BC}
Compared to both sampling and gradient computation inefficiency of policy return, directly using $\mathcal{D}_\texttt{off}$ is a more feasible way to estimate the policy performance in OBD, and a natural option is BC loss, {\it i.e.}, $\mathcal{H}(\pi_\theta, \mathcal{D}_\texttt{off}) = \ell_\texttt{BC}(\theta, \mathcal{D}_\texttt{off})$, which we refer to as \textbf{data-based BC (DBC)}. However, as $\mathcal{D}_\texttt{off}$ is collected by sub-optimal policies, DBC hardly evaluates the policy performance accurately. 

Benefiting from offline RL algorithms, we can extract the near-optimal policy $\pi^*$ and corresponding value function $q_{\pi^*}$ from $\mathcal{D}_\texttt{off}$ via carefully designed Bellman updates. Consequently, a more rational choice is to correct actions in $\mathcal{D}_\texttt{off}$ with $\pi^*$, leading to $\mathcal{H}(\pi, \mathcal{D}_\texttt{off}) = \mathbb{E}_{s \sim \mathcal{D}_\texttt{off}}\left[D_{\mathrm{TV}}\left(\pi^*(\cdot | s), \pi(\cdot | s)\right)\right] $, where $D_\mathrm{TV}\left(\pi^*(\cdot \mid s), \pi(\cdot \mid s)\right)= \frac{1}{2}\sum_{a\in \mathcal{A}} \left[\left\vert \pi^*(a | s) - \pi(a| s) \right\vert \right]$ is the total variation (TV) distance that measures the decision difference between $\pi^*$ and $\pi$ at state $s$, and we term this metric as \textbf{policy-based BC (PBC)}. With the exemplar $\pi^*$, offline behavior distillation performance $J(\pi)$, where $\pi$ is trained on $\mathcal{D}_\texttt{syn}$, can be guaranteed by the following theorem.
\begin{theorem}[Theorem $1$ in \citep{xu2020error}]
\label{thm:1}
Given two policies of $\pi^*$ and $\pi$ with $\mathbb{E}_{s \sim d_{\pi^*}(s)}\left[D_{\mathrm{TV}}\left(\pi^*(\cdot | s), \pi(\cdot | s)\right)\right] \leq \epsilon$, we have $|J(\pi^*) - J(\pi)|\leq \frac{2 R_{\max }}{(1-\gamma)^2} {\epsilon}$.
\end{theorem}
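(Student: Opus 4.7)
My plan is to bound the return gap $|J(\pi^*)-J(\pi)|$ by first expressing returns as expectations under the discounted state--action visitation distributions and then controlling the $L^1$ discrepancy between them. Specifically, from the identity $J(\pi)=\frac{1}{1-\gamma}\mathbb{E}_{(s,a)\sim \rho_\pi}[r(s,a)]$ and $0\le r\le R_{\max}$, I would start from
\begin{equation*}
|J(\pi^*)-J(\pi)| \;\le\; \frac{R_{\max}}{1-\gamma}\,\|\rho_{\pi^*}-\rho_\pi\|_1,
\end{equation*}
which reduces the theorem to showing $\|\rho_{\pi^*}-\rho_\pi\|_1 \le 2\epsilon/(1-\gamma)$.

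Next I would decompose pointwise
\begin{equation*}
\rho_{\pi^*}(s,a)-\rho_\pi(s,a)=d_{\pi^*}(s)\bigl(\pi^*(a|s)-\pi(a|s)\bigr)+\bigl(d_{\pi^*}(s)-d_\pi(s)\bigr)\pi(a|s),
\end{equation*}
choosing to anchor at $d_{\pi^*}$ so that the resulting policy-TV term matches the hypothesis. Summing absolute values and using $\sum_a |\pi^*(a|s)-\pi(a|s)|=2D_{\mathrm{TV}}(\pi^*(\cdot|s),\pi(\cdot|s))$ yields
\begin{equation*}
\|\rho_{\pi^*}-\rho_\pi\|_1 \;\le\; 2\,\mathbb{E}_{s\sim d_{\pi^*}}\!\bigl[D_{\mathrm{TV}}(\pi^*(\cdot|s),\pi(\cdot|s))\bigr] + \|d_{\pi^*}-d_\pi\|_1 \;\le\; 2\epsilon+\|d_{\pi^*}-d_\pi\|_1.
\end{equation*}
The first term is controlled by the assumption; all the remaining difficulty is concentrated in the state-visitation mismatch $\|d_{\pi^*}-d_\pi\|_1$.

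This state-visitation bound is the main obstacle and is the source of the second $1/(1-\gamma)$ factor. I would prove it by a simulation-style induction on the per-step marginals $d_\pi^t$: applying the same $d_{\pi^*}$-anchored decomposition after pushing one step through the Markov kernel $\mathcal{T}$ and using $\sum_{s'}\mathcal{T}(s'|s,a)=1$ gives the recursion
\begin{equation*}
\|d_{\pi^*}^{t+1}-d_\pi^{t+1}\|_1 \;\le\; \|d_{\pi^*}^t-d_\pi^t\|_1 + 2\,\mathbb{E}_{s\sim d_{\pi^*}^t}\!\bigl[D_{\mathrm{TV}}(\pi^*(\cdot|s),\pi(\cdot|s))\bigr],
\end{equation*}
with base gap $\|d_{\pi^*}^0-d_\pi^0\|_1=0$ since both trajectories start from $d^0$. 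Iterating gives $\|d_{\pi^*}^t-d_\pi^t\|_1 \le 2\sum_{k<t}\mathbb{E}_{s\sim d_{\pi^*}^k}[D_{\mathrm{TV}}(\pi^*,\pi)]$, and averaging against the $(1-\gamma)\gamma^t$ weights while swapping the order of summation collapses the double sum into $\|d_{\pi^*}-d_\pi\|_1\le \frac{2\gamma}{1-\gamma}\mathbb{E}_{s\sim d_{\pi^*}}[D_{\mathrm{TV}}(\pi^*,\pi)]\le \frac{2\gamma\epsilon}{1-\gamma}$. Combining the pieces yields $\|\rho_{\pi^*}-\rho_\pi\|_1 \le 2\epsilon+\frac{2\gamma\epsilon}{1-\gamma}=\frac{2\epsilon}{1-\gamma}$ and hence $|J(\pi^*)-J(\pi)|\le \frac{2R_{\max}\epsilon}{(1-\gamma)^2}$. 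The quadratic discount complexity is now transparent: one factor of $1/(1-\gamma)$ comes from converting a per-step reward bound into an accumulated return, and the other arises from compounding per-step action errors into a shift of the stationary state distribution over the infinite horizon---a decomposition that Theorem~\ref{thm:main} later exploits to shave off one such factor.
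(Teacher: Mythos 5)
Your proof is correct: the occupancy-measure reduction, the $d_{\pi^*}$-anchored decomposition of $\rho_{\pi^*}-\rho_\pi$, the one-step recursion for $\|d_{\pi^*}^t-d_\pi^t\|_1$, and the final exchange of summations all check out, and the constants combine to exactly $\frac{2R_{\max}}{(1-\gamma)^2}\epsilon$ with the expectation anchored at $d_{\pi^*}$ as in the hypothesis. It is, however, a genuinely different route from the paper's. The paper does not reprove Theorem \ref{thm:1} from scratch: it imports it from \citet{xu2020error} and then notes that a one-line derivation falls out of its own machinery, namely the exact identity of Theorem \ref{thm:main}, $J(\pi^*)-J(\pi)=\frac{1}{1-\gamma}\mathbb{E}_{s\sim d_\pi}\left[\sum_{a}q_{\pi^*}(s,a)\left(\pi^*(a|s)-\pi(a|s)\right)\right]$, combined with the crude bound $q_{\pi^*}(s,a)\le \frac{R_{\max}}{1-\gamma}$ of Eq.~\ref{eq:1} (this gives the expectation under $d_\pi$ rather than $d_{\pi^*}$; the Remark after Theorem \ref{thm:1} points out the two anchorings are interchangeable here). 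Your argument is essentially the classical imitation-learning bound, i.e.\ the route of \citet{xu2020error} themselves: it is self-contained, uses nothing beyond the Markov property and triangle inequalities, and makes the provenance of the two discount factors explicit---one from accumulating bounded rewards over the horizon, one from compounding per-step action error into a shift of the discounted state visitation. What it does not give you is the exact performance-difference identity itself, which is the extra structure the paper exploits to replace the worst-case weight $R_{\max}/(1-\gamma)$ by the state-action-dependent weight $q_{\pi^*}(s,a)$ and thereby shave off one $1/(1-\gamma)$ factor in Corollary \ref{corollary:1}; your closing remark correctly identifies this as the step where the quadratic complexity originates.
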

\begin{remark}
    The proof of Theorem \ref{thm:1} does not necessitate that $\pi^*$ is superior to $\pi$, and thus substituting  $s \sim d_{\pi^*}(s)$ in $\mathbb{E}_{s \sim d_{\pi^*}(s)}\left[D_{\mathrm{TV}}\left(\pi^*(\cdot | s), \pi(\cdot |  s)\right)\right] \leq \epsilon$ with $s \sim d_{\pi}(s)$ does not alter the outcome.
\end{remark}
Theorem \ref{thm:1} elucidates that $\pi$ has close performance to the good policy $\pi^*$ as long as they act similarly, and $J(\pi) \rightarrow J(\pi^*)$ if their decision difference $D_\mathrm{TV}\left(\pi^*(\cdot \mid s), \pi(\cdot \mid s)\right) \rightarrow 0$. 
This is optimistic for the conventional BC setting where the loss can be easily optimized to near zero. However, because of intractable bi-level optimization, the empirical objective $\epsilon$ is rarely decreased to small values in OBD. According to \citep{xu2020error}, the upper bound in Theorem \ref{thm:1} is tight as quadratic discount complexity $\mathcal{O}(1/\left(1-\gamma\right)^2)$ is inevitable in the worst-case, implying that the distillation performance guarantee collapses quickly as the PBC objective increases. To this end, a more effective OBD objective should be considered to ensure stronger distillation guarantees.

\subsection{Action-value weighted PBC}

{The preceding analysis highlights the inferior distillation guarantee of $\mathcal{O}(1/(1-\gamma)^2)$ with PBC. To establish a superior OBD objective, we prove the equivalence between the performance gap of $J(\pi^*) - J(\pi)$ and action-value weighted $\pi^*\left(a|s\right) - \pi\left(a|s\right)$ (Theorem \ref{thm:main}). By optimizing the weighted decision difference, the performance gap can be non-vacuously bounded with a reduced discount complexity of  $\mathcal{O}(1/(1-\gamma))$ (Corollary \ref{corollary:1})
. Motivated by these theoretical insights, we propose action-value weighted PBC as the OBD objective for a tighter distillation performance guarantee.
}

\begin{theorem}
\label{thm:main}
For any two policies $\pi$ and $\pi^*$, we have
    \begin{equation}
        J(\pi^*) - J(\pi) = \frac{1}{1-\gamma} \mathbb{E}_{s \sim d_{\pi}(s)} \left[ q_{\pi^*} \left(s,\cdot \right) \left(\pi^*\left(\cdot|s\right) - \pi\left(\cdot|s\right) \right)  \right],
    \end{equation}
where the dot notation $(\cdot)$ is a summation over the action space, {\it i.e.}, $q_{\pi^*} \left(s,\cdot \right) \left(\pi^*\left(\cdot|s\right) - \pi\left(\cdot|s\right) \right) = \sum_{a\in \mathcal{A}} q_{\pi^*} \left(s,a \right) \left(\pi^*\left(a|s\right) - \pi\left(a|s\right) \right)$.
\end{theorem}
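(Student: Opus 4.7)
The plan is to prove what is essentially a Performance Difference Lemma (Kakade--Langford style), but with $q_{\pi^*}$ (rather than $q_\pi$) as the advantage baseline and with states sampled from $d_\pi$. The starting point is $J(\pi^*) - J(\pi) = \mathbb{E}_{s_0 \sim d^0}[V_{\pi^*}(s_0) - V_\pi(s_0)]$, so I would first derive a pointwise recursion for $\delta(s) := V_{\pi^*}(s) - V_\pi(s)$, then unroll it along trajectories generated by $\pi$, and finally rewrite the resulting infinite sum of per-step expectations in terms of the discounted stationary distribution $d_\pi$.

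The algebraic core is a single add-and-subtract trick. Starting from $V_{\pi^*}(s) = \sum_a \pi^*(a|s) q_{\pi^*}(s,a)$ and $V_\pi(s) = \sum_a \pi(a|s) q_\pi(s,a)$, I would insert $\pm \sum_a \pi(a|s) q_{\pi^*}(s,a)$ to obtain
\begin{equation*}
\delta(s) = \sum_a \bigl(\pi^*(a|s) - \pi(a|s)\bigr) q_{\pi^*}(s,a) + \sum_a \pi(a|s)\bigl(q_{\pi^*}(s,a) - q_\pi(s,a)\bigr).
\end{equation*}
Writing $f(s)$ for the first sum and applying the elementary identity $q_{\pi^*}(s,a) - q_\pi(s,a) = \gamma\,\mathbb{E}_{s' \sim \mathcal{T}(\cdot|s,a)}[\delta(s')]$ (which follows immediately from the Bellman equations for the two action-value functions), the second term becomes $\gamma\,\mathbb{E}_{a\sim\pi,\,s'\sim\mathcal{T}}[\delta(s')]$. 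Thus $\delta$ satisfies the recursion $\delta(s) = f(s) + \gamma\,\mathbb{E}_{\pi}[\delta(s')\mid s]$, where the expectation is under one step of $\pi$'s dynamics.

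Iterating this recursion yields $\delta(s_0) = \sum_{t=0}^\infty \gamma^t\,\mathbb{E}_\pi[f(s_t)\mid s_0]$. Taking the expectation over $s_0 \sim d^0$ and swapping the sum with the expectation (justified by nonnegativity of $r$ and boundedness of $q_{\pi^*}$, hence absolute convergence), I get $J(\pi^*) - J(\pi) = \sum_{t=0}^\infty \gamma^t\,\mathbb{E}_{s\sim d_\pi^t}[f(s)]$. Interchanging the summation over $t$ with the summation over $s$ collapses $\sum_{t=0}^\infty \gamma^t d_\pi^t(s)$ into $\frac{1}{1-\gamma} d_\pi(s)$ by the very definition of the discounted stationary state distribution given in the preliminaries, which delivers the claimed identity.

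The main technical points to be careful about are the direction of the lemma and the identity of the state distribution that appears: because the recursion splits the ``$\pi^*$ vs $\pi$'' difference at the current state and then propagates forward using $\pi$'s transitions, it is $d_\pi$ (not $d_{\pi^*}$) that naturally arises, which is exactly the form stated. The only non-mechanical step is verifying that the Bellman-style identity for $q_{\pi^*} - q_\pi$ holds as written and that the resulting Neumann-like series is absolutely summable so that the interchange of sum and expectation is legitimate; both follow from $r \in [0, R_{\max}]$ and $\gamma \in (0,1)$, so there is no genuine obstacle, only bookkeeping.
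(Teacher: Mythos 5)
Your proposal is correct and follows essentially the same route as the paper's proof: the same add-and-subtract of the cross term $\sum_a \pi(a|s) q_{\pi^*}(s,a)$, the same Bellman step expressing $q_{\pi^*}-q_\pi$ via the next-state value gap, forward unrolling under $\pi$'s dynamics, and the collapse of $\sum_t \gamma^t d_\pi^t$ into $\frac{1}{1-\gamma} d_\pi$. The only difference is presentational — you run the recursion pointwise on $\delta(s)=V_{\pi^*}(s)-V_\pi(s)$ while the paper carries it at the level of expectations under $\rho_\pi^n$ — and your convergence/interchange remarks are sound.
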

\paragraph{Proof Sketch.} (1) With RL definitions, we represent $J(\pi^*) - J(\pi)$ by
\begin{equation}
    J(\pi^*) - J(\pi) = \mathbb{E}_{s\sim d_{\pi^*}^0(s)}\left[ q_{\pi^*} \left(s, \cdot\right) \left(\pi^*(\cdot|s) - \pi(\cdot|s) \right)  \right] + \blue{\mathbb{E}_{\rho_{\pi}^1 (s,a)} \left[ q_{\pi^*} \left(s,a \right) - q_{\pi} \left(s,a \right)  \right]}; \nonumber
\end{equation}
(2) then we prove the iterative formula {\it w.r.t.} $\mathbb{E}_{\rho_{\pi}^n(s,a)} \left[ q_{\pi^*} \left(s,a \right) - q_{\pi} \left(s,a \right)  \right]$:
\begin{align}
    \mathbb{E}_{\blue{\rho_{\pi}^n(s,a)}} & \left[ q_{\pi^*} \left(s,a \right) - q_{\pi} \left(s,a \right)  \right] \nonumber\\
    =& \gamma \mathbb{E}_{s\sim d_\pi^{n+1}(s)} \left[ q_{\pi^*} \left(s,\cdot\right) \left(\pi^*(\cdot|s)  - \pi(\cdot|s) \right)  \right] + \gamma \mathbb{E}_{\blue{\rho_{\pi}^{n+1}(s,a)}} \left[ q_{\pi^*} \left(s,a\right) -  q_{\pi} \left(s,a\right) \right]; \nonumber
\end{align}
(3) integrating the two equations above yields the desired result
\begin{align}
    J(\pi^*) - J(\pi) = \sum_{t=0}^\infty \gamma^t \mathbb{E}_{s\sim d_\pi^{t}(s)} \left[ q_{\pi^*} \left(s,\cdot\right) \left(\pi^*(\cdot|s)  - \pi(\cdot|s) \right)  \right]. \nonumber
\end{align}
The complete proof can be found in Appendix \ref{app:proof thm main}. 
Since {\it $q_{\pi^*}\left(s,a \right)$ represents the expected return under the decent policy $\pi^*$ when staring from $(s,a)$ and reaches the maximum if $\pi^*$ is truly optimal, it can be interpreted as the importance of $(s,a)$}, and higher return is likely to be achieved when starting from more important $(s,a)$.
Consequently, the gap between $J(\pi^*)$ and $J(\pi)$ directly depends on the importance-weighted decision difference between $\pi^*$ and $\pi$. Based on Theorem \ref{thm:main} and $q_{\pi^\ast} \geq 0$, we can readily derive a bound on the guarantee on $|J(\pi^*) - J(\pi)|$ by applying the triangle inequality.
\begin{corollary}
\label{corollary:1}
Given two policies of $\pi^*$ and $\pi$ with $\mathbb{E}_{s\sim d_{\pi}(s)}\left[q_{\pi^*} \left(s,\cdot \right) \left\vert\pi^*(\cdot|s) - \pi(\cdot|s) \right\vert   \right] \leq \epsilon$, we have $|J(\pi^*) - J(\pi)| \leq \frac{1}{1-\gamma} \epsilon$.
\end{corollary}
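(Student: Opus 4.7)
The plan is to deduce the corollary directly from Theorem~\ref{thm:main} by a single chain of inequalities, using only the triangle inequality, Jensen's inequality for the absolute value, and the nonnegativity of $q_{\pi^*}$ noted in the preliminaries ($r_t \geq 0$ implies $q_{\pi^*}(s,a) \geq 0$ pointwise).

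First, I would start from the exact identity provided by Theorem~\ref{thm:main}, namely
\[
J(\pi^*) - J(\pi) = \frac{1}{1-\gamma}\,\mathbb{E}_{s \sim d_{\pi}(s)}\!\left[\sum_{a\in\mathcal{A}} q_{\pi^*}(s,a)\bigl(\pi^*(a|s) - \pi(a|s)\bigr)\right].
\]
Taking absolute values on both sides and pulling the absolute value inside the expectation (since $|\mathbb{E}[X]| \leq \mathbb{E}[|X|]$) gives
\[
|J(\pi^*) - J(\pi)| \leq \frac{1}{1-\gamma}\,\mathbb{E}_{s \sim d_{\pi}(s)}\!\left|\sum_{a\in\mathcal{A}} q_{\pi^*}(s,a)\bigl(\pi^*(a|s) - \pi(a|s)\bigr)\right|.
\]

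Next, I would apply the triangle inequality inside the sum over actions, and then exploit the crucial nonnegativity of the action-value function. Specifically, $q_{\pi^*}(s,a) \geq 0$ allows me to pull $q_{\pi^*}(s,a)$ out of the absolute value unchanged, yielding
\[
|J(\pi^*) - J(\pi)| \leq \frac{1}{1-\gamma}\,\mathbb{E}_{s \sim d_{\pi}(s)}\!\left[\sum_{a\in\mathcal{A}} q_{\pi^*}(s,a)\,\bigl|\pi^*(a|s) - \pi(a|s)\bigr|\right].
\]
Applying the hypothesis then immediately gives $|J(\pi^*) - J(\pi)| \leq \epsilon/(1-\gamma)$.

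There is no real obstacle here: the entire content of the corollary sits inside Theorem~\ref{thm:main}. The only subtle point worth flagging in the write-up is that the nonnegativity of $q_{\pi^*}$ (already established in the preliminaries under the assumption $r(s,a) \in [0, R_{\max}]$) is what prevents a factor of $R_{\max}$ from reappearing; without it, bounding $|q_{\pi^*}(s,a)(\pi^*(a|s) - \pi(a|s))|$ would require $q_{\pi^*}(s,a) \leq R_{\max}/(1-\gamma)$, which would restore the quadratic discount complexity. This observation is exactly what distinguishes the linear $\mathcal{O}(1/(1-\gamma))$ guarantee of Av-PBC from the quadratic $\mathcal{O}(1/(1-\gamma)^2)$ guarantee of PBC in Theorem~\ref{thm:1}, and I would highlight this in a short remark following the proof.
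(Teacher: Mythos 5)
Your proposal is correct and follows exactly the paper's route: the paper derives Corollary~\ref{corollary:1} from Theorem~\ref{thm:main} by applying the triangle inequality together with $q_{\pi^*} \geq 0$, which is precisely your chain of steps (your Jensen step is just the triangle inequality for the expectation). Your closing remark about nonnegativity of $q_{\pi^*}$ being what avoids the extra $R_{\max}/(1-\gamma)$ factor matches the paper's own ``Comparison to Thm.~\ref{thm:1}'' discussion.
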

{\paragraph{Tightness} Since only the triangle inequality is applied, there exists the worst case for $\pi$ where $\pi^*(a|s) - \pi(a|s) < 0$ holds only when $q_{\pi^*}(s,a)=0$. This makes the inequality collapse to equality in Corollary \ref{corollary:1}, thereby demonstrating that the upper bound in Corollary \ref{corollary:1} is {\it non-vacuous}.}


\paragraph{Comparison to Thm. \ref{thm:1}} With the fact $q_{\pi^*} \left(s,a \right) \leq \sum_{t=0}^\infty R_{\max} = \frac{R_{\max}}{1- \gamma}$, we have
\begin{equation}
\label{eq:1}
    \mathbb{E}_{s\sim d_{\pi}(s)}\left[q_{\pi^*} \left(s,\cdot \right) \left\vert\pi^*(\cdot|s) - \pi(\cdot|s) \right\vert   \right] \leq \frac{R_{\max}}{1- \gamma} \mathbb{E}_{s \sim d_{\pi}(s)}\left[ \left\vert\pi^*(\cdot|s) - \pi(\cdot|s) \right\vert   \right],
\end{equation}
therefore our bound in Corollary \ref{corollary:1} is significantly tighter than Theorem \ref{thm:1}, as $q_{\pi^*} \left(s,a \right) = \sum_{t=0}^\infty R_{\max}$ requires $\pi^*$ to achieve the maximum reward at every step. This condition is particularly difficult for sparse-reward environments where most $r(s,a)$ are close to zero. Moreover, combining the proof of Theorem \ref{thm:main} and Eq. \ref{eq:1} provides a more straightforward proof of Theorem \ref{thm:1}. 

As shown by the theoretical analysis, action-value weighted objective offers stronger distillation guarantees due to the linear discount factor complexity $\mathcal{O}(1/(1-\gamma))$. This improvement alleviates the loose guarantee caused by limited optimization in OBD compared to former quadratic $\mathcal{O}(1/(1-\gamma)^2)$. Accordingly, we propose \textbf{action-value weighted PBC (Av-PBC)} as the OBD objective:
\begin{equation}
\label{eq:av-pbc}
    \mathcal{H}(\pi, \mathcal{D}_\texttt{off})= \mathbb{E}_{s\sim \mathcal{D}_\texttt{off}} \left[q_{\pi^*}(s,\cdot) \left(\pi\left(\cdot| s\right) - \pi^*\left(\cdot| s\right)  \right)^2 \right].
\end{equation}
While Av-PBC is theoretically induced, it is quite intuitive to understand: states $s$ in $\mathcal{D}_\texttt{off}$ are normally sampled by a mixture of policies instead of the expert $\pi^*$. If we sampled a bad state $s$ with extremely small $q_{\pi^*}(s,a)$, measuring the decision difference between $\pi$ and $\pi^*$ will be less important. As for practical implementation, Eq. \ref{eq:av-pbc} requires summing over the entire action space $\mathcal{A}$ to compute $\sum_{a\in \mathcal{A}}$, which is highly inefficient for large $|\mathcal{A}|$. Considering the expert policy is typically highly concentrated, {\it i.e.}, only a few actions are selected by $\pi^\ast$ with large action values, we instead sample $a\sim \pi^\ast(\cdot|s)$ to efficiently estimate Eq. \ref{eq:av-pbc}.
The pseudo-code of Av-PBC is presented in Algorithm \ref{alg:obd}.

\begin{algorithm}[t]
\SetKwInOut{KwIn}{Input}
\SetKwInOut{KwOut}{Output}
\SetKw{KwBy}{by}
\caption{Action-value weighted PBC}
\label{alg:obd}
\KwIn{offline RL dataset $\mathcal{D}_\texttt{off}$, synthetic data size $N_\texttt{syn}$, loop step ${T}$, ${T}_\texttt{out}$, learning rate $\alpha_0$, $\alpha_1$, momentum rate $\beta_0$, $\beta_1$}
\KwOut{synthetic dataset $\mathcal{D}_\texttt{syn}$}
$\pi^*, q_{\pi^*} \leftarrow \texttt{OfflineRL}(\mathcal{D}_\texttt{off})$ \\
Initialize $\mathcal{D}_\texttt{syn}=\{(s_i, a_i)\}_{i=1}^{{N}_\texttt{syn}}$ by randomly sampling $(s_i, a_i)\sim \mathcal{D}_\texttt{off}$  \\
\For{$t_\texttt{out}=1$ \textbf{\text{to}} ${T}_\texttt{out}$}{
Randomly initialize policy network parameters $\theta_0$ \\
$\triangleright$ Behavioral cloning with synthetic data.\\
\For{$t=1$ \textbf{\text{to}} ${T}$}{
Compute the BC loss {\it w.r.t.} synthetic data $\mathcal{L}_{t-1} = \ell_\texttt{BC}(\theta_{t-1}, \mathcal{D}_\texttt{syn})$ \\
Update $\theta_t \leftarrow \texttt{GradDescent}(\nabla_{\theta_{t-1}} \mathcal{L}_{t-1}, \alpha_0, \beta_0)$
}
Construct the minibatch $\mathcal{B}=\{(s_i, a_i)\}_{i=1}^{|\mathcal{B}|}$ by sampling $s_i\sim \mathcal{D}_\texttt{off}$ and $a_i \sim \pi^\ast(\cdot | s_i)$\\
Compute $\mathcal{H}(\pi_{\theta_T}, \mathcal{B}) = \frac{1}{\vert \mathcal{B} \vert}\sum_{i=1}^{|\mathcal{B}|} \blue{q_{\pi^*}(s_i, a_i)} \left(\pi_{\theta_T}\left(a_i|s_{i}\right) - \pi^*\left(a_i|s_{i}\right)\right)^2$ \\
Update $\mathcal{D}_\texttt{syn} \leftarrow \texttt{GradDescent}(\nabla_{\mathcal{D}_\texttt{syn}} \mathcal{H}(\pi_{\theta_T},\mathcal{B}), \alpha_1, \beta_1)$ \\
}
\end{algorithm}


\section{Experiments}
\label{sec:experiments}
In this section, we evaluate the proposed ODB algorithms across multiple offline RL datasets from perspectives of (1) distillation performance, (2) distillation convergence speed, (3) cross-architecture and cross-optimizer generalization, and (4) policy ensemble performance {\it w.r.t.} distilled data.

\paragraph{Datasets} We conduct offline behavior distillation on D4RL \citep{fu2020d4rl}, a widely used offline RL benchmark. Specifically, OBD algorithms are evaluated on three popular environments of \texttt{Halfcheetah}, \texttt{Hopper}, and \texttt{Walker2D}. For each environment, three offline RL datasets of varying quality are provided by D4RL, {\it i.e.}, \texttt{medium-replay} (M-R), \texttt{medium} (M), and \texttt{medium-expert} (M-E) datasets. Thus, a total of $3\times 3=9$ datasets are employed to assess OBD algorithms. \texttt{medium} dataset is collected from the environment with ``medium'' level policies; \texttt{medium-replay} dataset consists of recording all samples in the replay buffer observed during training this ``medium'' level policy; and \texttt{medium-expert} dataset is a mixture of expert demonstrations and sub-optimal data.

\paragraph{Setup} The advanced offline RL algorithm of Cal-QL \citep{nakamoto2023calql} is utilized to extract the decent $\pi^*$ and $q_{\pi^*}$ from $\mathcal{D}_\texttt{off}$. A four-layer MLP serves as the default architecture for policy networks. The size of synthetic data $N_\texttt{syn}$ is set to $256$. Standard SGD is employed in both inner and outer optimization, and learning rates $\alpha_0=0.1$ and $\alpha_1=0.1$ for the inner and outer loop, respectively, and corresponding momentum rates $\beta_0=0$ and $\beta_1=0.9$. Additional implementation details are provided in Appendix \ref{app:implementation details}.

\paragraph{Evaluation} To accesss the performance of $\mathcal{D}_\texttt{syn}$, we train policy networks on $\mathcal{D}_\texttt{syn}$ with standard BC, and obtain the corresponding averaged return by interacting with the environment for $10$ episodes. We use \texttt{normalized return} \citep{fu2020d4rl} for better visualization: $\texttt{normalized return}=100 \times \frac{\texttt{return - random return}}{\texttt{expert return - random return}}$, where \texttt{random return} and \texttt{expert return} refer to returns of random policies and the expert policy (online SAC \citep{haarnoja2018soft}), respectively.

\paragraph{Baselines} (1) {\it Random Selection}: randomly selecting $N_\texttt{syn}$ real state-action pairs from $\mathcal{D}_\texttt{off}$; (2) {\it DBC}; (3) {\it PBC}; (4) {\it Av-PBC}. We also report policy performance of behavioral cloning and Cal-QL in terms of training on the whole offline dataset $\mathcal{D}_\texttt{off}$ for a comprehensive comparison.

\subsection{Main Results}
\begin{table}[t]
\scriptsize
    \centering
    \caption{Offline behavior distillation performance on D4RL offline datasets. The result for Random Selection (Random) is obtained by repeating $10$ times. For DBC, PBC, and Av-PBC, the results are averaged across five seeds and the last five evaluation steps. The best OBD result for each dataset is marked with \textbf{bold} scores, and \orange{orange-colored} scores denote instances where OBD outperforms BC.}
    \resizebox{\linewidth}{!}{
    \begin{tabular}{c  ccc  ccc  ccc c}
    \toprule
     \multirow{2}{*}{Method}  & \multicolumn{3}{c}{Halfcheetach} & \multicolumn{3}{c}{Hopper} & \multicolumn{3}{c}{Walker2D} & \multirow{2}{*}{Average} \\
     & M-R & M & M-E & M-R & M & M-E & M-R & M & M-E \\
     \specialrule{1pt}{0.2\jot}{0.2pc}
     Random & $0.9$ & $1.8$ & $2.0$ & $19.1$ & $19.2$ & $11.6$ & $1.9$ & $4.9$ & $6.7$ & $7.6$   \\
     DBC & $2.5$ & $28.2$ & \bm{$29.0$} & $12.1$ & \bm{$37.8$} & $31.1$ & $6.1$ & $29.3$ & $11.7$   & $20.9$ \\
     PBC & \orange{$19.4$} & $30.9$ & $20.5$ & \orange{$35.6$} & $25.1$ & $33.4$ & \orange{$41.5$} & $33.2$ & $34.0$   & $30.4$ \\
     Av-PBC & \orange{\bm{$35.9$}} & \bm{$36.9$} & $22.0$ & \orange{\bm{$40.9$}} & $32.5$ & \bm{$38.7$} & \orange{\bm{$55.0$}} & \bm{$39.5$} & \bm{$42.1$}   & \bm{$38.2$}\\
     \specialrule{1pt}{0.2\jot}{0.2pc}
     BC (Whole) & $14.0$ & $42.3$ & $59.8$ & $22.9$ & $50.2$ & $51.7$ & $14.6$ & $65.9$ & $89.6$ & $45.7$  \\
     OffRL (Whole) & $45.8$ & $47.6$ & $50.8$ & $98.0$ & $56.4$ & $107.3$ & $87.4$ & $84.0$ & $109.0$ & $70.1$  \\
    \bottomrule
    \end{tabular}
    }
    \label{tab:obd result}
\end{table}

We first investigate the performance of various OBD algorithms (DBC, PBC, Av-PBC) across offline datasets of varying quality and environments, as detailed in Table \ref{tab:obd result}. Several observations are obtained from the results: (1) offline behavior distillation effectively synthesize informative data that enhance policy training (DBC/PBC/Av-PBC {\it vs.} Random Selection); (2) PBC demonstrates better distillation performance than the basic DBC, especially given the low-quality RL data, highlighting the benefit of action correction in the sub-optimal data ($30.4$ {\it vs.} $20.9$); (3) Av-PBC considerably outperforms PBC across all datasets ($38.2$ {\it vs.} $30.4$); (4) when the offline data are collected by low-quality policies (\texttt{medium-replay}), Av-PBC can surpass BC trained on the whole data, while it gradually lags behind BC with higher-quality offline data (\texttt{medium-replay} and \texttt{medium-expert}); (5) given that the objective of OBD is to approximate the decent policy extracted by offline RL algorithms, offline RL serves as an upper bound for OBD performance. In summary, the empirical results show that Av-PBC increases OBD performance by a substantial margin compared to the baselines ($82.8\%$ for DBC and $25.7\%$ for PBC).

An interesting phenomenon observed with Av-PBC is that {\it synthetic data distilled from \texttt{medium-replay} offline datasets exhibit better performance than those distilled from \texttt{medium-expert} offline datasets}. We explain here: while \texttt{medium-expert} data offer better quality, \texttt{medium-replay} data contains more diverse states due to being sampled by a mixture of less-trained policies that explore a wider rage of states. This is similar to exploration-exploitation dilemma in RL \citep{sutton2018reinforcement} and underscores the importance of state coverage in original data for OBD.

\paragraph{Training Time Comparison} To further illustrate the advantages of OBD, we compare the time required for training polices on original data versus OBD-distilled data. For synthetic data with a size of $256$, only $100$ optimization steps are necessary, corresponding to a training time of $0.2$s, while $25\text{k}$$\sim$$125\text{k}$ steps are required for BC on original data. With distilled data, the training time can be reduced by over $99.5\%$. A detailed list of training steps for all datasets is provided in Appendix \ref{app:training time}.

\paragraph{Convergence Speed of OBD} To compare the convergence speed of OBD algorithms, we plot the performance of various OBD algorithms over distillation step; please see Figure \ref{figure:obd training}. These plots demonstrate that Av-PBC not only improves the OBD performance, but has significant convergence speed and requires only a quarter of the distillation steps compared to DBC and PBC, which is essential for OBD considering the compute-intensive bi-level optimization.
\begin{figure*}[t]
\centering
\includegraphics[width=0.5\columnwidth]{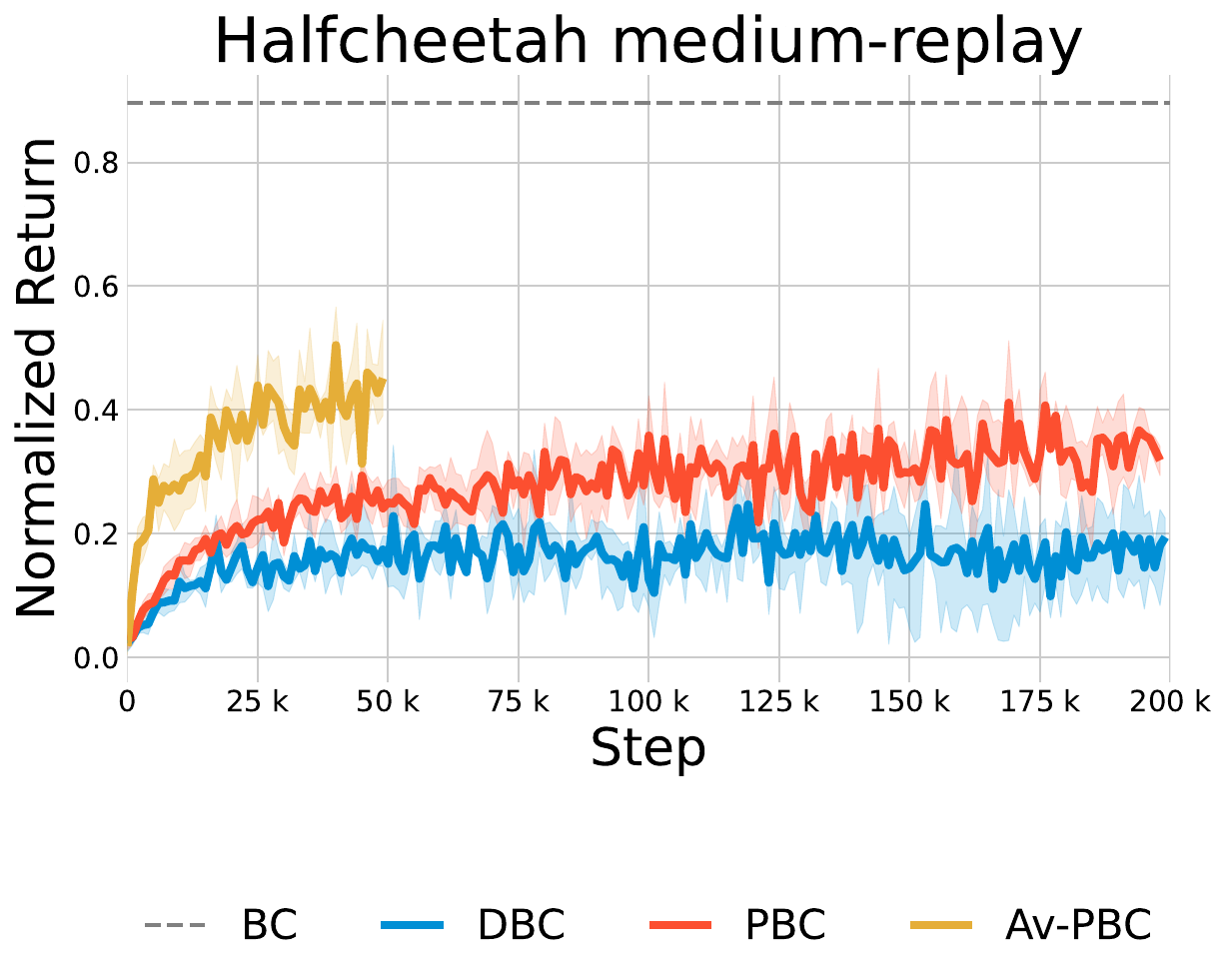}
\vspace{1mm}
\\
\subfigure[Halfcheetah]{
\begin{minipage}[b]{0.316\textwidth}
\centering
    		\includegraphics[width=0.95\columnwidth]{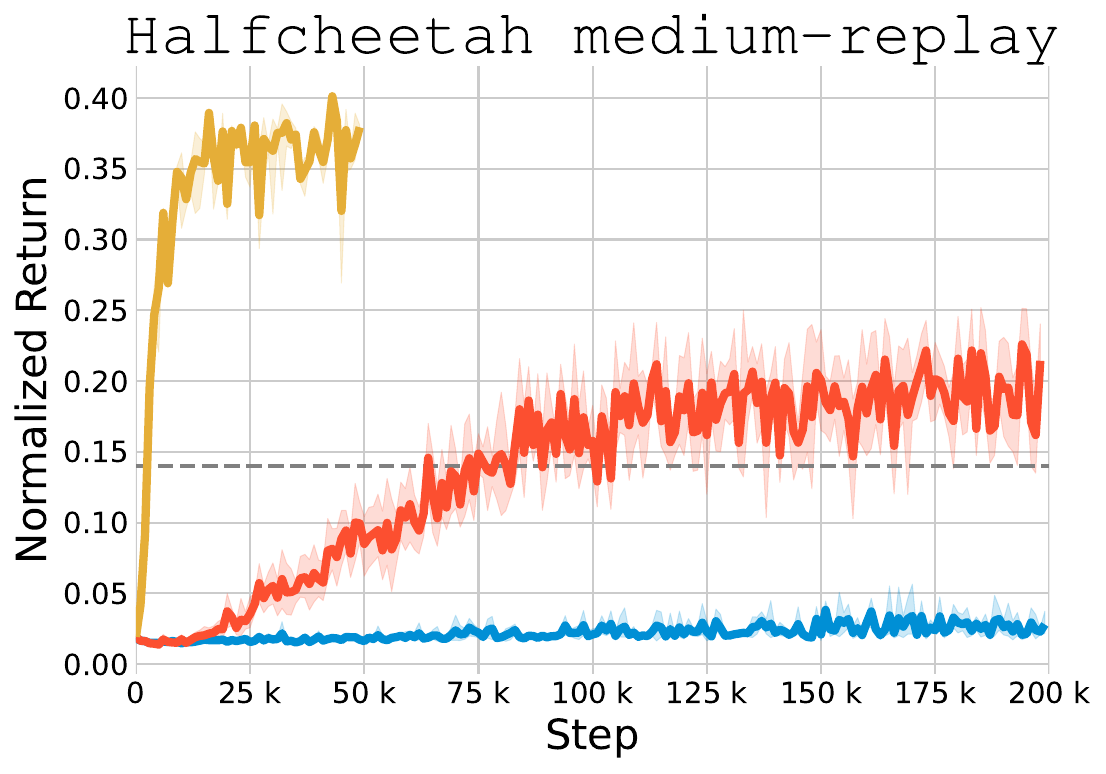} 
    		\includegraphics[width=0.95\columnwidth]{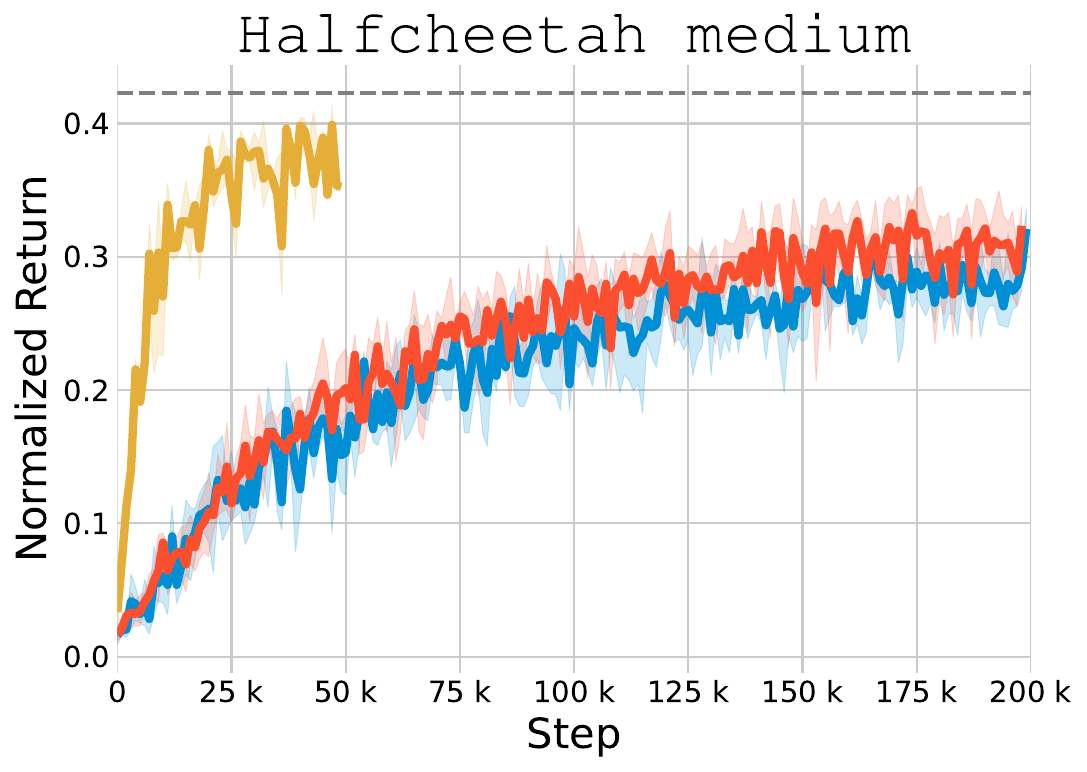}
                \includegraphics[width=0.95\columnwidth]{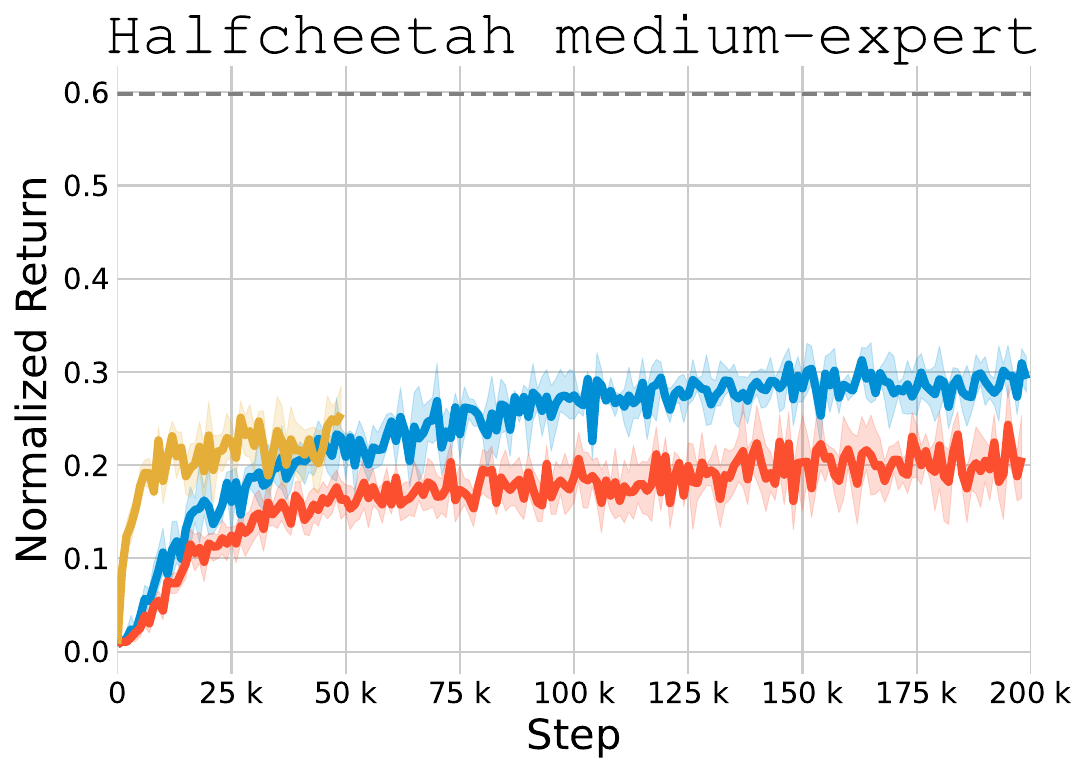}
    		\end{minipage}
		\label{figure:halfcheetah}   
    	}
\hfill
\subfigure[Hopper]{
\begin{minipage}[b]{0.32\textwidth}
\centering
    		\includegraphics[width=0.95\columnwidth]{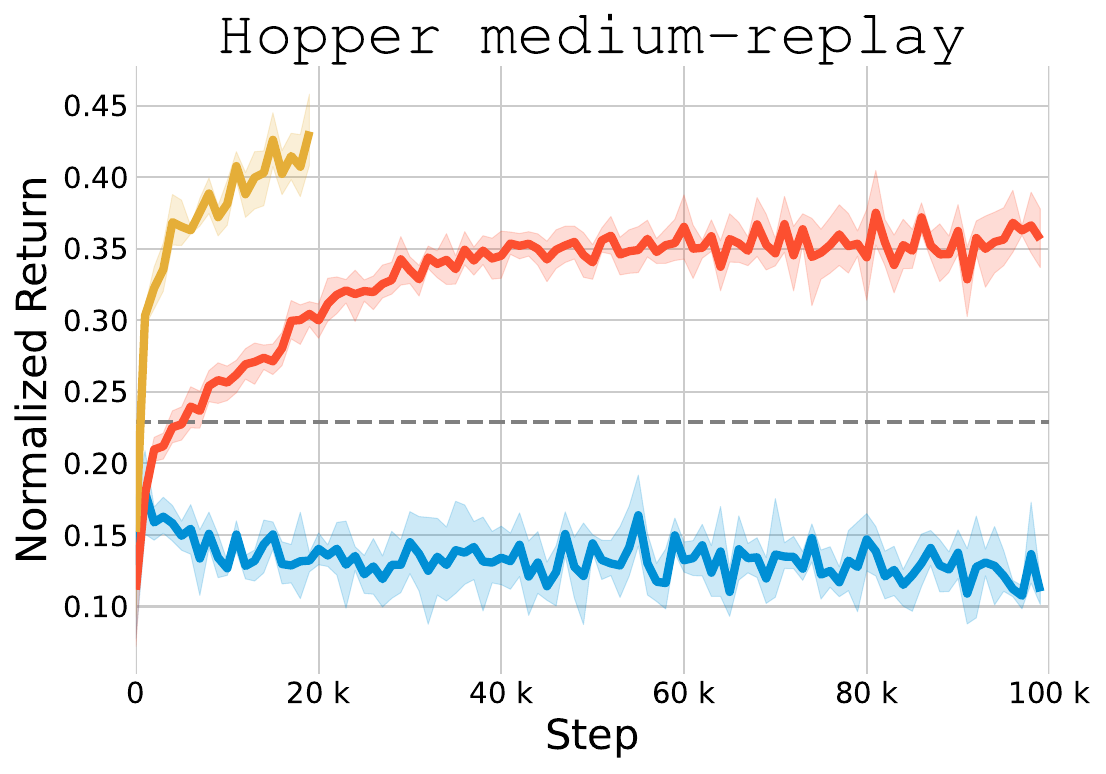}
    		\includegraphics[width=0.95\columnwidth]{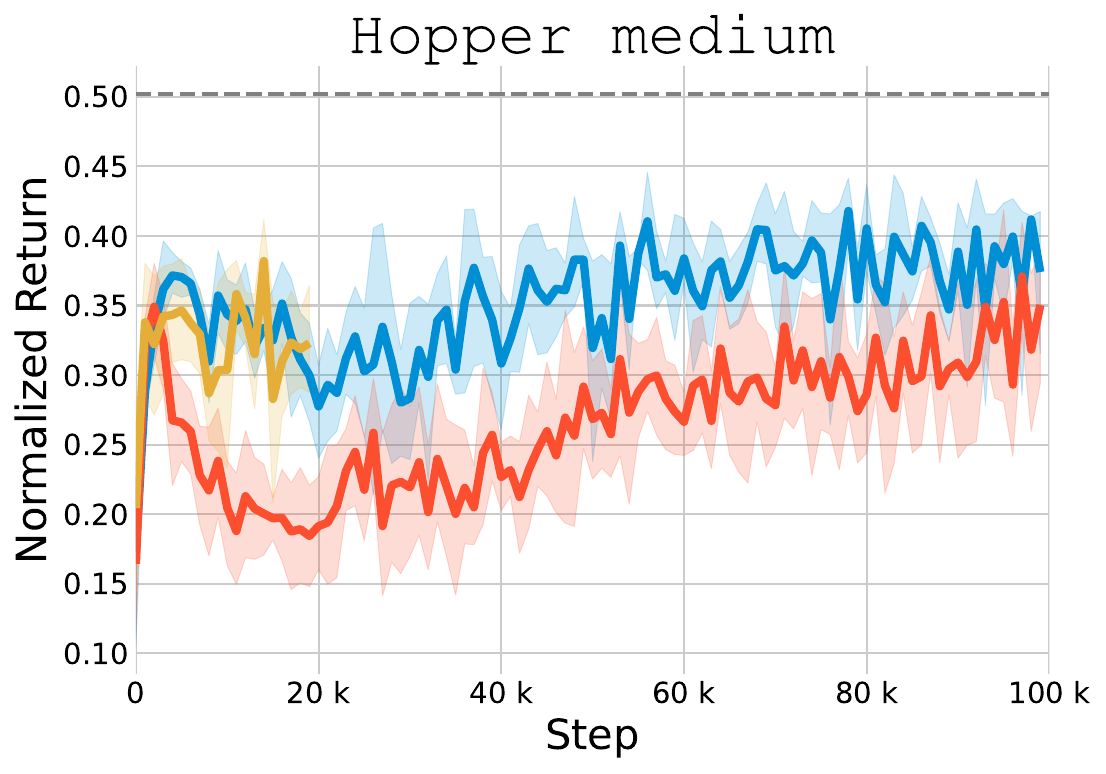}
                \includegraphics[width=0.95\columnwidth]{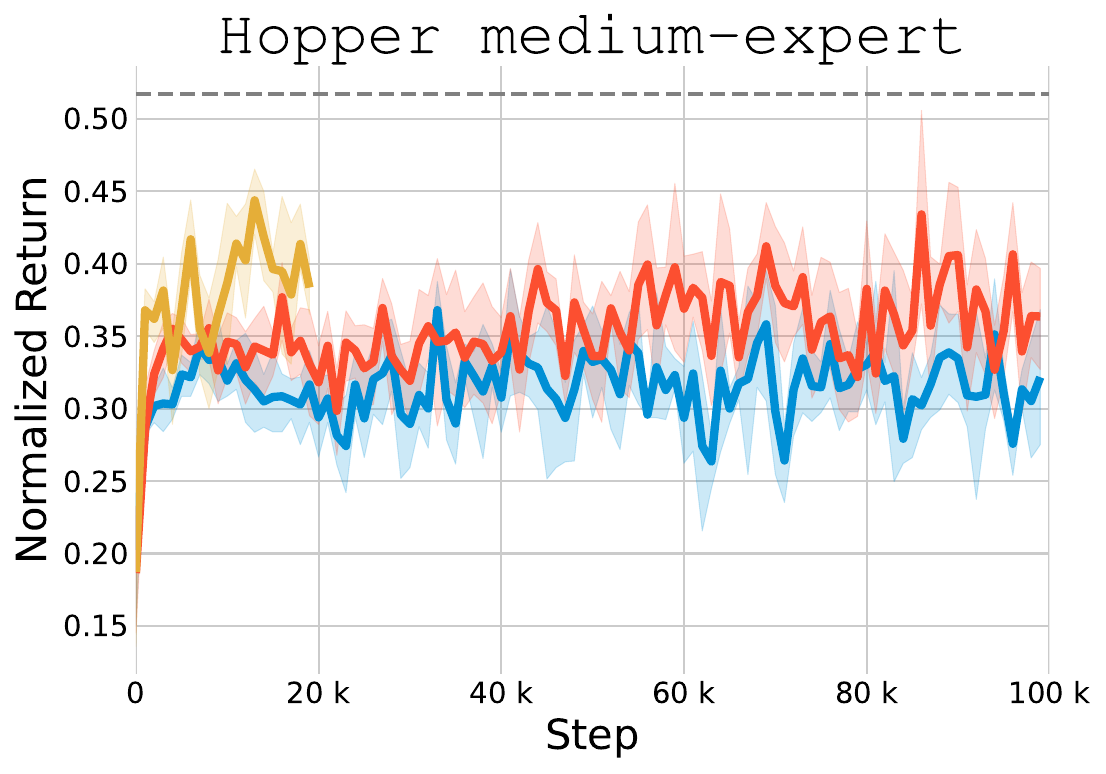}
    		\end{minipage}
		\label{figure:hopper}   
    	}
\hfill
\subfigure[Walker2D]{
\begin{minipage}[b]{0.314\textwidth}
\centering
    		\includegraphics[width=0.95\columnwidth]{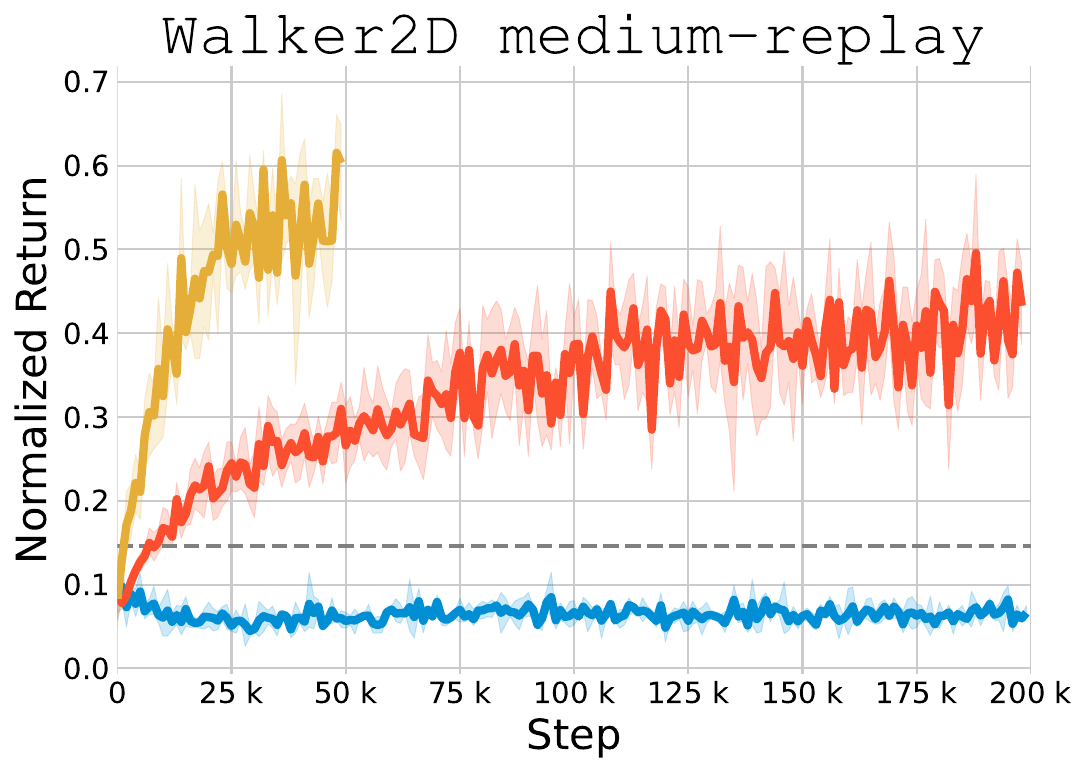}
    		\includegraphics[width=0.95\columnwidth]{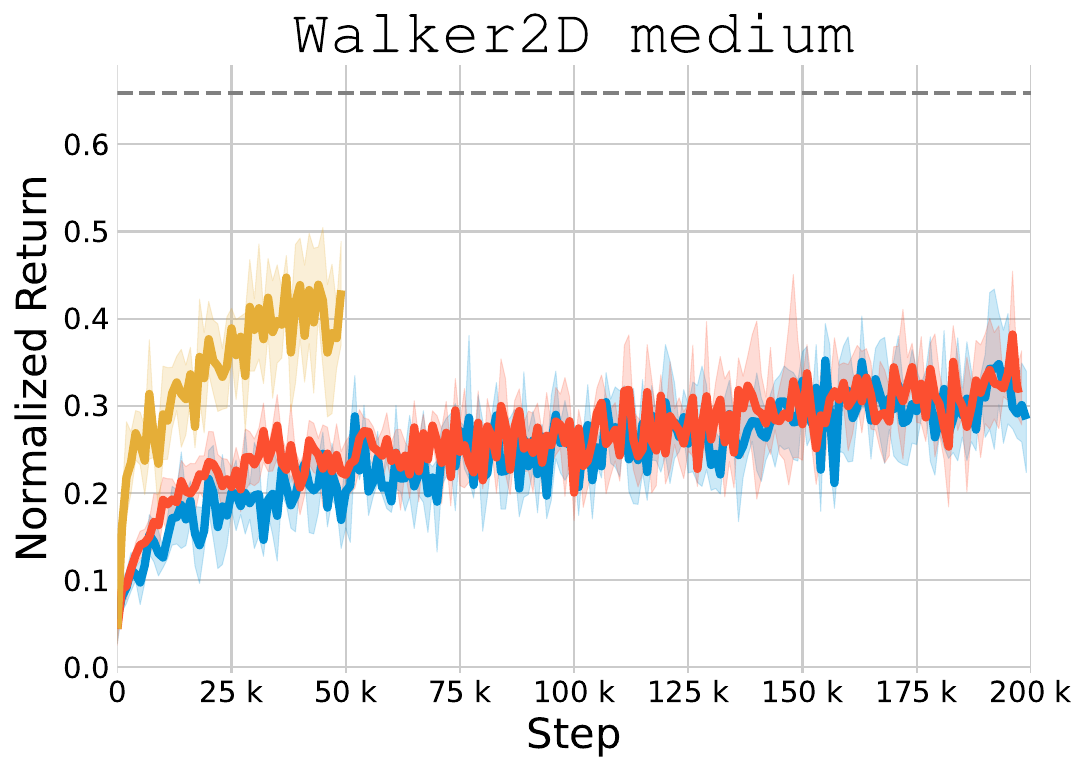}
                \includegraphics[width=0.95\columnwidth]{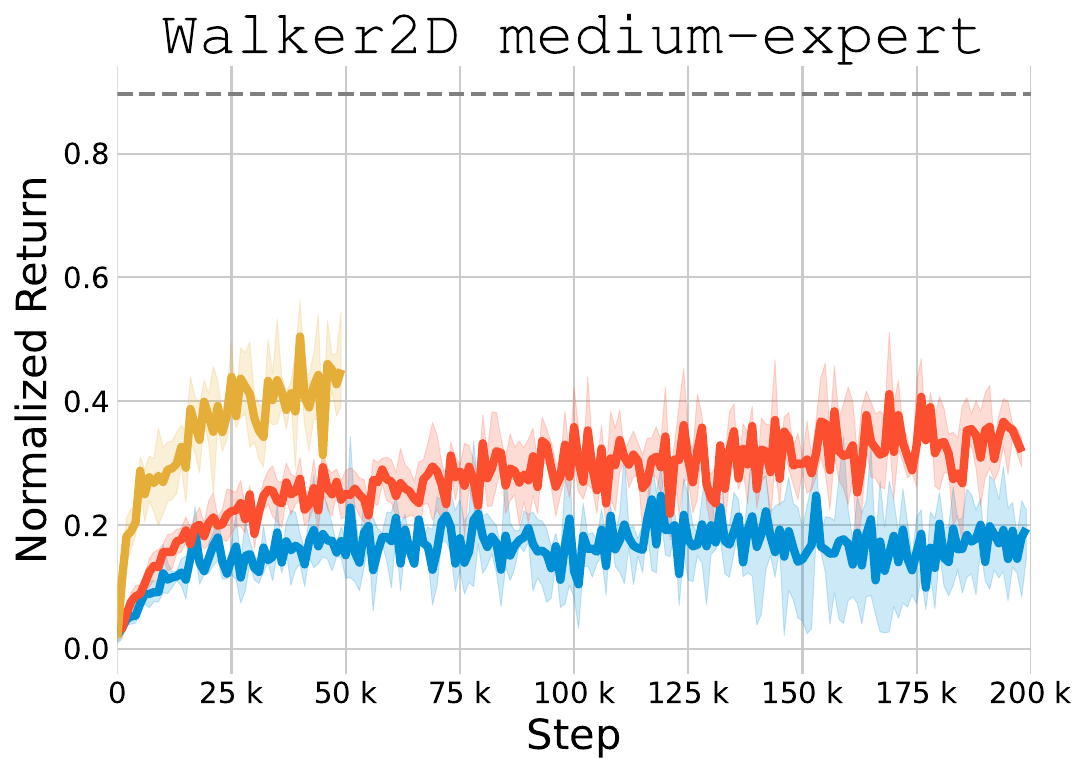}
    		\end{minipage}
		\label{figure:walker2d}   
    	}
\caption{Plots of OBD performance, represented by the normalized returns of policies trained on synthetic data, as functions of distillation steps on (a) Halfcheetah; (b) Hopper; and (c) Walker2D environment. Each curve is averaged over five random seeds.}
\label{figure:obd training}
\end{figure*}

\begin{table}[t]
\scriptsize
    \centering
    \caption{Offline behavior distillation performance across various policy network architectures and optimizers (Optim). {\color{red} Red-colored} scores and {\color{ForestGreen} green-colored} scores in brackets denote the performance degradation and improvement, respectively, compared to the default training setting. The results are averaged over five random seeds and the last five evaluation steps.}
    \resizebox{\linewidth}{!}{
    \begin{tabular}{c c  ccc  ccc  ccc c}
    \toprule
     & \multirow{2}{*}{Arch/Opt}  & \multicolumn{3}{c}{Halfcheetach} & \multicolumn{3}{c}{Hopper} & \multicolumn{3}{c}{Walker2D} & \multirow{2}{*}{Average} \\
     & & M-R & M & M-E & M-R & M & M-E & M-R & M & M-E  \\
     \specialrule{1pt}{0.2\jot}{0.2pc}
     \multirow{5}{*}{\rotatebox{90}{Architecture}} & $2$-layer & $37.1$ & $35.9$ & $10.9$ & $29.9$ & $26.2$ & $33.9$ & $49.2$ & $41.3$ & $51.1$ & \redformat{35.1}{3.1} \\
     &$3$-layer & $38.6$ & $39.7$ & $19.4$ & $39.0$ & $28.1$ & $41.5$ & $63.2$ & $44.1$ & $55.3$ & \greenformat{41.0}{2.8}  \\
     &$5$-layer & $36.1$ & $37.7$ & $20.0$ & $37.1$ & $29.1$ & $36.6$ & $52.0$ & $36.7$ & $31.6$ & \redformat{35.2}{3.0} \\
     &$6$-layer & $32.1$ & $36.0$ & $17.3$ & $36.9$ & $29.6$ & $32.8$ & $47.1$ & $28.2$ & $25.5$ & \redformat{31.7}{6.5} \\
     &Residual & $36.9$ & $36.4$ & $20.0$ & $38.8$ & $29.8$ & $40.3$ & $47.5$ & $35.7$ & $37.1$ & \redformat{35.8}{2.4}  \\
     \specialrule{1pt}{0.2\jot}{0.2pc}
     \multirow{3}{*}{\rotatebox{90}{Optim}}& Adam & $35.8$ & $37.6$ & $22.9$ & $40.5$ & $31.2$ & $40.2$ & $55.8$ & $41.9$ & $47.7$ & \greenformat{39.3}{1.1} \\
     & AdamW & $36.8$ & $37.9$ & $21.4$ & $40.6$ & $33.3$ & $41.1$ & $55.4$ & $44.2$ & $43.2$ & \greenformat{39.3}{1.1} \\
     & SGDm & $36.4$ & $37.3$ & $21.8$ & $40.4$ & $30.9$ & $39.2$ & $54.7$ & $40.2$ & $42.1$ & \redformat{38.1}{0.1} \\

    \bottomrule
    \end{tabular}
    }
    \label{tab:cross generalization}
\end{table}

\paragraph{Cross Architecture and Optimizer Performance} We evaluate the synthetic data across various training configurations to assess the cross-architecture/optimizer generalization of Av-PBC. Concretely, we employ the data distilled by Av-PBC with the default network ($4$-layer MLP) and optimizer (SGD) to train (1) different networks of $2$/$3$/$5$/$6$-layer and residual MLPs and (2) the default $4$-layer MLP with different optimizers of Adam, AdamW, and SGDm (SGD with momentum=$0.9$). The results are presented in Table \ref{tab:cross generalization}. As shown in the last column of average performance, we observe that (1) albeit a slight drop, synthetic data distilled by Av-PBC are still valid in training different policy networks, and (2) the performance of distilled data is relatively robust to the variation of optimizers. Therefore, the Av-PBC-distilled data possess satisfied cross-architecture/optimizer performance.

\paragraph{Policy Ensemble on OBD Data} With the tiny distilled dataset, policy ensemble can be efficiently performed to further enhance policy performance. This is achieved by training multiple policy networks on synthetic data and then combining their outputs to generate actions. To evaluate the performance gain from policy ensemble, we train $10$ policy networks with different seeds; please see Table \ref{tab:ensemble}. The results demonstrate that (1) policies trained on synthetic data can be substantially boosted through ensemble ($25.8\%$ for Av-PBC); and (2) Av-PBC exhibits a larger performance gain than DBC and PBC ($9.9$ {\it vs.} $7.0$/$7.3$), highlighting the advantages of Av-PBC in policy ensemble.

\begin{table}[t]
\scriptsize
    \centering
    \caption{Offline behavior distillation performance on D4RL offline datasets with ensemble num of $10$. {\color{ForestGreen} Green-colored} scores in brackets denote the performance improvement compared to the non-ensemble setting. The results are averaged over five random seeds and the last five evaluation steps.}
    \resizebox{\linewidth}{!}{
    \begin{tabular}{c  ccc  ccc  ccc c}
    \toprule
     \multirow{2}{*}{Method}  & \multicolumn{3}{c}{Halfcheetach} & \multicolumn{3}{c}{Hopper} & \multicolumn{3}{c}{Walker2D} & \multirow{2}{*}{Average} \\
     & M-R & M & M-E & M-R & M & M-E & M-R & M & M-E \\
     \specialrule{1pt}{0.2\jot}{0.2pc}
     
     DBC & $2.0$ & $30.0$ & $31.8$ & $9.3$ & $44.9$ & $43.3$ & $5.8$ & $50.6$ & $33.6$   & \greenformat{27.9}{7.0}\\
     PBC & $12.9$ & $33.4$ & $31.6$ & $36.6$ & $36.7$ & $41.8$ & $64.1$ & $41.6$ & $42.0$   & \greenformat{37.9}{7.3}\\
     Av-PBC & $39.8$ & $41.4$ & $37.2$ & $39.7$ & $27.6$ & $38.8$ & $75.9$ & $58.6$ & $73.7$   & \greenformat{48.1}{9.9}\\
     
    \bottomrule
    \end{tabular}
    }
    \label{tab:ensemble}
\end{table}



\section{Discussion}
\label{sec:discussion}

{
\paragraph{Applications} Distilled behavioral data encapsulate critical decision-making knowledge from offline RL data and associated environment, making them highly applicable to various downstream RL tasks. 
Through BC on distilled data, we can {\it rapidly pretrain a good policy} for online RL fine-tuning \citep{goecks2019integrating}. On the other hand, after offline pretraining, the policy can be further enhanced by online fine-tuning, while there exists {\it catastrophic forgetting} {\it w.r.t.} offline data knowledge during fine-tuning \citep{luo2023finetuning}. To tackle this challenge, \citet{zhao2022adaptive} propose to use BC loss {\it w.r.t.} offline data as a constraint during the fine-tuning phase. By replacing the massive offline data with distilled data, we can achieve more efficient loss computation and thus better algorithm convergence. A similar approach can be achieved to circumvent catastrophic forgetting in continual offline RL \citep{gai2023offline}, where the goal is to learn a sequence of offline RL tasks while retaining good performance across all tasks. Moreover, {\it multi-task offline RL} \citep{yu2021conservative}, which learns multiple RL tasks jointly from a combination of specific offline datasets, also receives benefits from OBD in terms of efficiency by alternative training on the mixture of distilled data via BC \citep{lupu2024behaviour}. 

Beyond benefits in efficient policy training, OBD shows potential for {\it protecting data privacy}: given that offline datasets often contain sensitive information, such as medical records, privacy concerns are significant in offline RL due to various privacy attacks on the learned policies \citep{qiao2023offline}. OBD can enhance privacy preservation by publishing smaller, distilled datasets instead of the full, sensitive data. Besides, distilled behavioral data is also beneficial for {\it explainable RL} by highlighting the critical states and corresponding actions. A example of this is provided in Appendix \ref{app:distilled data examples}.


\paragraph{Limitations} The OBD data are 2-tuples of \texttt{(state, action)} and exclude \texttt{reward}. Thus, the distilled data are solely leveraged by the supervised BC and invalid for conventional RL algorithms with Bellman backup. Despite this deficiency, OBD data can still facilitate the applications above by efficiently injecting high-quality decision-making knowledge into policy networks with BC loss. 

We note that two major challenges remain in current OBD algorithms: distillation inefficiency and policy performance degradation. While our Av-PBC substantially decreases the distillation steps, the OBD process is still computationally expensive ($25$ hours for $50$k distillation steps on a single NVIDIA V100 GPU) due to the bi-level optimization involved. Moreover, there remains a notable performance gap between OBD and the whole data with offline RL algorithms ($38.2$ {\it vs.} $70.1$ in Table \ref{tab:obd result}). These limitations also shed light on future directions in improving the efficiency of OBD and bridging the gap between synthetic data and the original offline RL dataset.
}


\section{Conclusion}
In this paper we integrate the advanced dataset distillation with offline RL data, formalizing the concept of offline behavior distillation (OBD). We introduce two OBD objectives: the naive offline data-based BC (DBC) and its policy-corrected variant, PBC. Through comprehensive theoretical analysis, we demonstrate that PBC offers inferior OBD performance guarantee of $\mathcal{O}(1/(1-\gamma)^2)$ under complex bi-level optimization, which inevitably incurs significant distillation loss.. To tackle this issue, we theoretically establish the equivalence between policy performance gap and action-value weighted decision difference, leading to the proposal of action-value weighted BC (Av-PBC). This novel Av-PBC objective significantly improves the performance guarantee to $\mathcal{O}(1/(1-\gamma))$. Extensive experiments on multiple offline RL datasets demonstrate that Av-PBC vastly enhances OBD performance and accelerates the distillation process by several times. 


\bibliography{ref}
\bibliographystyle{plainnat}

\appendix

\newpage 

\section{Proofs}
\label{app:proof}

\subsection{Proof of Theorem \ref{thm:main}}
\label{app:proof thm main}
\begin{proof}

With the definitions $\rho_\pi^n(s,a) = \pi(a|s) d_\pi^{n-1} (s) $ and $J(\pi) = \mathbb{E}_{\rho_\pi^1(s,a)}\left[q_\pi \left(s,a \right)\right]$, we have
\begin{align}
\label{eq:t1e1}
    & J(\pi^*) - J(\pi) \nonumber\\
    & = \mathbb{E}_{\rho_{\pi^*}^1(s,a)}\left[q_{\pi^*} \left(s,a \right)\right] - \mathbb{E}_{\rho_\pi^1(s,a)}\left[q_\pi \left(s,a \right)\right] \nonumber\\
    & = \sum_{(s,a)\in \mathcal{S}\times \mathcal{A}} \left[\rho_{\pi^*}^1(s,a) q_{\pi^*} \left(s,a \right) - \rho_\pi^1(s,a)  q_\pi \left(s,a \right) \right] \nonumber\\
    & = \sum_{(s,a)\in \mathcal{S}\times \mathcal{A}} \left[\pi^*(a|s) d_{\pi^*}^0(s) q_{\pi^*} \left(s,a \right) - \pi(a|s) d_{\pi}^0(s)  q_\pi \left(s,a \right) \right] \nonumber\\
    & = \sum_{(s,a)\in \mathcal{S}\times \mathcal{A}} \left[\pi^*(a|s) d_{\pi^*}^0(s) q_{\pi^*} \left(s,a \right) - \pi(a|s) d_{\pi^*}^0(s) q_{\pi^*} \left(s,a \right) \right. \nonumber\\
    & \quad\quad \left. + \pi(a|s) d_{\pi^*}^0(s) q_{\pi^*} \left(s,a \right)  - \pi(a|s) d_{\pi}^0(s)  q_\pi \left(s,a \right) \right] \quad (d_{\pi^*}^0(s) \equiv d_{\pi}^0(s) \equiv d^0(s))\nonumber\\
    &= \mathbb{E}_{s\sim d_{\pi^*}^0(s)}\left[\sum_{a\in \mathcal{A}} \left(\pi^*(a|s) - \pi(a|s) \right)  q_{\pi^*} \left(s,a \right) \right] + \mathbb{E}_{\rho_{\pi}^1 (s,a)} \left[ q_{\pi^*} \left(s,a \right) - q_{\pi} \left(s,a \right)  \right]
\end{align}
For the term $q_{\pi^*} \left(s,a \right) - q_{\pi} \left(s,a \right)$, we have
\begin{align}
    & q_{\pi^*} \left(s,a \right) - q_{\pi} \left(s,a \right) \nonumber \\
    &= r(s,a) + \gamma \mathbb{E}_{s'\sim \mathcal{T}(s'| s, a)}\left[\sum_{a'\in \mathcal{A}}\pi^*(a'|s') q_{\pi^*} \left(s',a'\right) \right] \nonumber\\
    &\quad - r(s,a) - \gamma \mathbb{E}_{s'\sim \mathcal{T}(s'| s, a)} \left[\sum_{a'\in \mathcal{A}} \pi(a'|s') q_{\pi} \left(s',a'\right) \right]\nonumber \\
    &= \gamma \mathbb{E}_{s'\sim \mathcal{T}(s'| s, a)} \left[\sum_{a'\in \mathcal{A}}\pi^*(a'|s') q_{\pi^*} \left(s',a'\right) - \pi(a'|s') q_{\pi} \left(s',a'\right)\right]
\end{align}
Furthermore, due to $d_\pi^{n+1}(s') = \rho_{\pi}^n(s,a) \mathcal{T}(s'| s, a)$ we have
\begin{align}
\label{eq:t1e3}
    & \mathbb{E}_{\rho_{\pi}^n(s,a)} \left[ q_{\pi^*} \left(s,a \right) - q_{\pi} \left(s,a \right)  \right] \nonumber \\
    & = \gamma \mathbb{E}_{\rho_{\pi}^n(s,a)} \left[\mathbb{E}_{s'\sim \mathcal{T}(s'| s, a)} \left[\sum_{a'\in \mathcal{A}}\pi^*(a'|s') q_{\pi^*} \left(s',a'\right) - \pi(a'|s') q_{\pi} \left(s',a'\right)\right] \right] \nonumber \\
    & = \gamma \mathbb{E}_{s \sim d_\pi^{n+1}(s)} \left[\sum_{a\in \mathcal{A}}\pi^*(a|s) q_{\pi^*} \left(s,a\right) - \pi(a|s) q_{\pi} \left(s,a\right) \right] \nonumber \\
    & = \gamma \mathbb{E}_{s \sim d_\pi^{n+1}(s)} \left[\sum_{a\in \mathcal{A}}\pi^*(a|s) q_{\pi^*} \left(s,a\right) - \pi(a|s) q_{\pi^*} \left(s,a\right) + \pi(a|s) q_{\pi^*} \left(s,a\right) - \pi(a|s) q_{\pi} \left(s,a\right) \right] \nonumber \\
    &= \gamma \mathbb{E}_{s \sim d_\pi^{n+1}(s)} \left[\sum_{a\in \mathcal{A}}\left(\pi^*(a|s)  - \pi(a|s) \right) q_{\pi^*} \left(s,a\right) \right]  + \gamma \mathbb{E}_{s \sim d_\pi^{n+1}(s)} \left[\sum_{a\in \mathcal{A}} \pi(a|s) q_{\pi^*} \left(s,a\right) - \pi(a|s) q_{\pi} \left(s,a\right) \right] \nonumber \\
    &= \gamma \mathbb{E}_{s \sim d_\pi^{n+1}(s)} \left[\sum_{a\in \mathcal{A}}\left(\pi^*(a|s)  - \pi(a|s) \right) q_{\pi^*} \left(s,a\right) \right] + \gamma \mathbb{E}_{\rho_{\pi}^{n+1}(s,a)} \left[ q_{\pi^*} \left(s,a\right) -  q_{\pi} \left(s,a\right) \right]
\end{align}
Plugging the iterative formula of Eq. \ref{eq:t1e3}  into Eq. \ref{eq:t1e1} yields the desired equality:
\begin{align}
    & J(\pi^*) - J(\pi) \nonumber\\
    &= \mathbb{E}_{s\sim d_{\pi^*}^0(s)}\left[\sum_{a\in \mathcal{A}}\left(\pi^*(a|s) - \pi(a|s) \right)  q_{\pi^*} \left(s,a \right) \right] + \mathbb{E}_{\rho_{\pi}^1 (s,a)} \left[ q_{\pi^*} \left(s,a \right) - q_{\pi} \left(s,a \right)  \right] \nonumber \\
    &= \sum_{t=0}^\infty \gamma^t \mathbb{E}_{s\sim d_{\pi}^t(s)}\left[\sum_{a\in \mathcal{A}}\left(\pi^*(a|s) - \pi(a|s) \right)  q_{\pi^*} \left(s,a \right) \right] \nonumber \\
    &= \frac{1}{1-\gamma} \mathbb{E}_{s\sim d_{\pi}(s)}\left[\sum_{a\in \mathcal{A}}\left(\pi^*(a|s) - \pi(a|s) \right)  q_{\pi^*} \left(s,a \right) \right]
\end{align}
The last equation uses the definition that $d_{\pi}(s) = (1-\gamma) \sum_{t=0}^\infty \gamma^t d_{\pi}^t(s)$. The proof is completed.
\end{proof}

\section{Implementation Details}
\label{app:implementation details}
This section provides all the additional implementation details of our experiments.

\paragraph{OBD Settings} The policy network is a 4-layer multilayer perceptron (MLP) with a width of $256$.  The synthetic data are initialized by randomly selecting $N_\texttt{syn}$ state-action pairs from the offline data. For DBC and PBC, the distillation step $T_\texttt{out}$ is set to $200$k for \texttt{Halfcheetah} and \texttt{Walker2D} and $50$k for \texttt{Hopper}, respectively. For Av-PBC, the distillation step $T_\texttt{out}$ is set to $50$k for \texttt{Halfcheetah} and \texttt{Walker2D} and $20$k for \texttt{Hopper}, respectively. The inner loop step $T_\texttt{in}$ is set to $100$.


\paragraph{Offline RL Policy Training} We use the advanced offline RL algorithm of Cal-QL \citep{nakamoto2023calql} to extract the decent policy $\pi^*$ and corresponding q value function $q_{\pi^*}$ from sub-optimal offline data, and the implementation in \citep{tarasov2022corl} is employed in our experiments with default hyper-parameter setting.

\paragraph{Cross-architecture Experiments.} The width of MLPs are both $256$. The residual MLP is a 4-layer MLP, and the intermediate layers are packaged into the residual block.

\section{Training Time Comparison}
\label{app:training time}
\begin{table}[ht]
\scriptsize
    \centering
    \caption{The size and required training steps for convergence for each offline dataset. M denotes the million for simplicity. The size and step for synthetic data (Synset) are listed in the last column.}
    \resizebox{\linewidth}{!}{
    \begin{tabular}{c  ccc  ccc  ccc c}
    \toprule
     \multirow{2}{*}{ }  & \multicolumn{3}{c}{Halfcheetach} & \multicolumn{3}{c}{Hopper} & \multicolumn{3}{c}{Walker2D} & \multirow{2}{*}{Synset}\\
     & M-R & M & M-E & M-R & M & M-E & M-R & M & M-E \\
     \specialrule{1pt}{0.2\jot}{0.2pc}
     
     Size & $0.2$M & $1$M & $2$M & $0.4$M &  $1$M & $2$M & $0.3$M &  $1$M & $2$M & $256$ \\
     \specialrule{0.3pt}{0.2\jot}{0.2pc}
     Step (k) & $40$ & $25$ & $100$ & $80$ & $50$ & $100$ & $60$ & $50$ & $125$  & $0.1$ \\
    \bottomrule
    \end{tabular}
    }
    \label{tab:training steps}
\end{table}
For the whole original data, offline RL algorithms require dozens of hours. Therefore, we solely compare the training time of BC on synthetic data and BC on original data. Because the training time varies with GPU models (NVIDIA V100 used in our experiments), we report the optimization step, which has a linear relationship to training time, required for training convergence for each original dataset, as shown in Table \ref{tab:training steps}. 

\newpage

\section{Examples of Distilled Data}
\label{app:distilled data examples}
We present several examples of distilled behavioral data for Halfcheetah in Figure \ref{fig:distilled data}. The top row illustrates the distilled states, while the bottom row depicts the subsequent states after executing the corresponding distilled actions within the environment. The figure demonstrates that (1) the distilled states prioritize ``critical states'' or ``imbalanced states'' (for the cheetah) more than ``balanced states''; and (2) the states following the execution of distilled actions are closer to ``balanced states'' compared to the initial distilled states. These examples offer insights into the explainability of reinforcement learning processes.

\begin{figure}[h]
    \centering
    \includegraphics[width=0.95\linewidth]{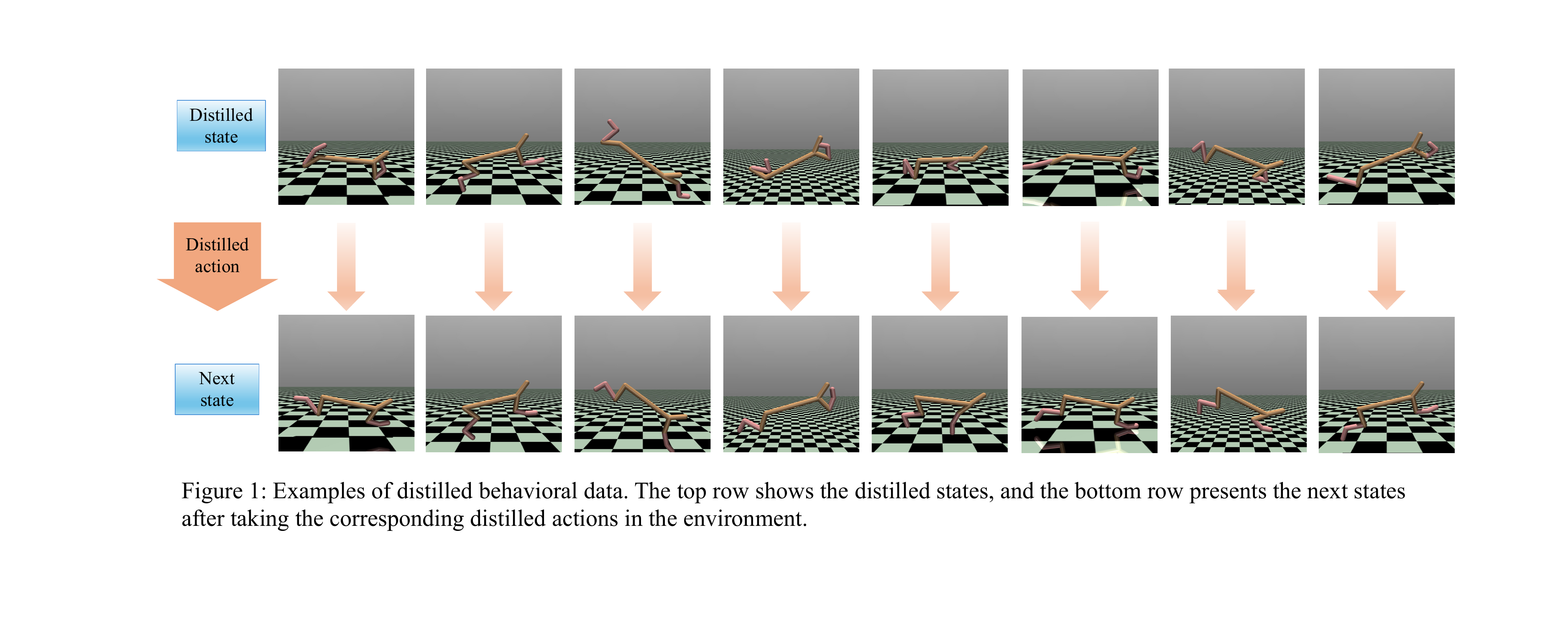}
    \caption{Examples of distilled behavioral data. The top row shows the distilled states, while the bottom row presents the subsequent state following the execution of the corresponding distilled actions within the environment.}
    \label{fig:distilled data}
\end{figure}

\section{The Performance of Av-PBC across Different Synthetic Data Sizes}
\label{app:distilled data size}
We investigate the impact of varying synthetic data size on OBD performance. The results, as shown in Table \ref{tab:obd}, suggest that OBD performance improves with an increase in synthetic data size. This enhancement is attributed to the larger synthetic datasets conveying more comprehensive information regarding the RL environment and associated decision-making processes.
\begin{table}[h]
    \centering
    \caption{The Av-PBC performance on D4RL offline datasets with different synthetic data sizes.}
    \begin{tabular}{c  ccccc}
    \toprule
     \multirow{2}{*}{Dataset}  & \multicolumn{5}{c}{Synthetic Data Size}  \\
     & $\bm{16}$ & $\bm{32}$ & $\bm{64}$ & $\bm{128}$ & $\bm{256}$ \\
     \specialrule{1pt}{0.2\jot}{0.2pc}
      {Halfcheetah M-R} & $6.9$ & $15.3$ & $23.8$ & $33.2$ & $35.9$ \\ 
       {Hopper M-R} & $27.3$ & $29.9$ & $32.3$ & $38.1$ & $40.9$ \\ 
       {Walker2D M-R} & $14.8$ & $21.8$ & $34.0$ & $50.0$ & $55.0$ \\ 
    \bottomrule
    \end{tabular}
    \label{tab:obd}
\end{table}

\end{document}